\newcommand{\mR}{\mathbb{R}}
\newcommand{\mE}{\mathbb{E}}
\newcommand{\mP}{\mathbb{P}}
\DeclareMathOperator*{\trace}{tr}
\DeclareMathOperator*{\SE}{SE}
\DeclareMathOperator*{\SO}{SO}
\DeclareMathOperator*{\Trans}{Trans}
\DeclareMathOperator*{\Rot}{Rot}
\DeclareMathOperator*{\mat}{mat}
\DeclareMathOperator*{\support}{supp}
\DeclareMathOperator*{\mK}{\mathbb{K}}
\DeclareMathOperator*{\pconverge}{\overset{\textit{p}}{\rightarrow}}
\DeclareMathOperator*{\identitymap}{Id}
\newtheorem{theorem}{Theorem}
\newtheorem{lemma}{Lemma}
\def\tsc#1{\csdef{#1}{\textsc{\lowercase{#1}}\xspace}}
\begin{document}
\let\WriteBookmarks\relax
\def\floatpagepagefraction{1}
\def\textpagefraction{.001}

\shorttitle{}

\shortauthors{}

\title [mode = title]{Registering the 4D Millimeter Wave Radar Point Clouds Via Generalized Method of Moments}

\tnotemark[1]

\tnotetext[1]{This work is supported by the National Key R\&D Program of China (Grant 2020YFC2007500), the National Natural Science Foundation of China (Grant U1813206), the Science and Technology Commission of Shanghai Municipality, China (Grant 20DZ2220400) and Natural Science Foundation of Shanghai (Grant 25ZR1401208).}

%

\author[1,2]{Xingyi Li}

\author[1,2]{Han Zhang}[orcid = 0000-0002-3905-0633]
\cormark[1]
\ead{zhanghan_tc@sjtu.edu.cn}
\cortext[1]{Corresponding author}

\author[1,2]{Ziliang Wang}
\author[3]{Yukai Yang}
\author[1,2]{Weidong Chen}

\affiliation[1]{
  organization={School of Automation and Intelligent Sensing, Shanghai Jiao Tong University},
  city={Shanghai},
  country={China}
}

\affiliation[2]{
  organization={Institute of Medical Robotics, Shanghai Jiao Tong University},
  city={Shanghai},
  country={China}
}

\affiliation[3]{
  organization={Department of Statistics, Uppsala University},
  country={Sweden}
}


\begin{abstract}
4D millimeter wave radars (4D radars) are new emerging sensors that provide point clouds of objects with both position and radial velocity measurements. Compared to LiDARs, they are more affordable and reliable sensors for robots' perception under extreme weather conditions.
On the other hand, point cloud registration is an essential perception module that provides robot's pose feedback information in applications such as Simultaneous Localization and Mapping (SLAM).
Nevertheless, the 4D radar point clouds are sparse and noisy compared to those of LiDAR, and hence we shall confront great challenges in registering the radar point clouds.
To address this issue, we propose a point cloud registration framework for 4D radars based on Generalized Method of Moments. 
The method does not require explicit point-to-point correspondences between the source and target point clouds, which is difficult to compute for sparse 4D radar point clouds.
Moreover, we show the consistency of the proposed method.
Experiments on both synthetic and real-world datasets show that our approach achieves higher accuracy and robustness than benchmarks, and the accuracy is even comparable to LiDAR-based frameworks.
\end{abstract}




\begin{keywords}
 \sep Millimeter wave radar \sep point cloud registration \sep method of moments \sep SLAM.
\end{keywords}

\maketitle

\definecolor{limegreen}{rgb}{0.2, 0.8, 0.2}
\definecolor{forestgreen}{rgb}{0.13, 0.55, 0.13}
\definecolor{greenhtml}{rgb}{0.0, 0.5, 0.0}

\section{Introduction}
4D millimeter-wave radars (4D radars) are emerging sensors that are more affordable and weather-resilient compared to LiDARs. Hence it is attracting an increasing amount of attention in the fields of autonomous driving and robot perception. In particular, they scan the environment and provide point clouds with range, azimuth, elevation and Doppler velocity measurements. 
On the other hand, registration is a fundamental task in point clouds processing, and it has been widely exploited in robotic applications such as Simultaneous Localization and Mapping (SLAM) \cite{li20234d, zhang20234dradarslam, kim2024radar4motion, seok2025radar4voxmap}. More specifically, registration aligns multiple point cloud collections into a unified coordinate frame by inferring the relative transformation poses \cite{huang2021comprehensive}.
However, 4D radar point clouds are typically sparse and noisy compared to those of LiDARs, as illustrated in Fig. \ref{fig: radar-to-lidar}, thus making the registration challenging.

\begin{figure}
  \centering
  \includegraphics[width=0.5\textwidth]{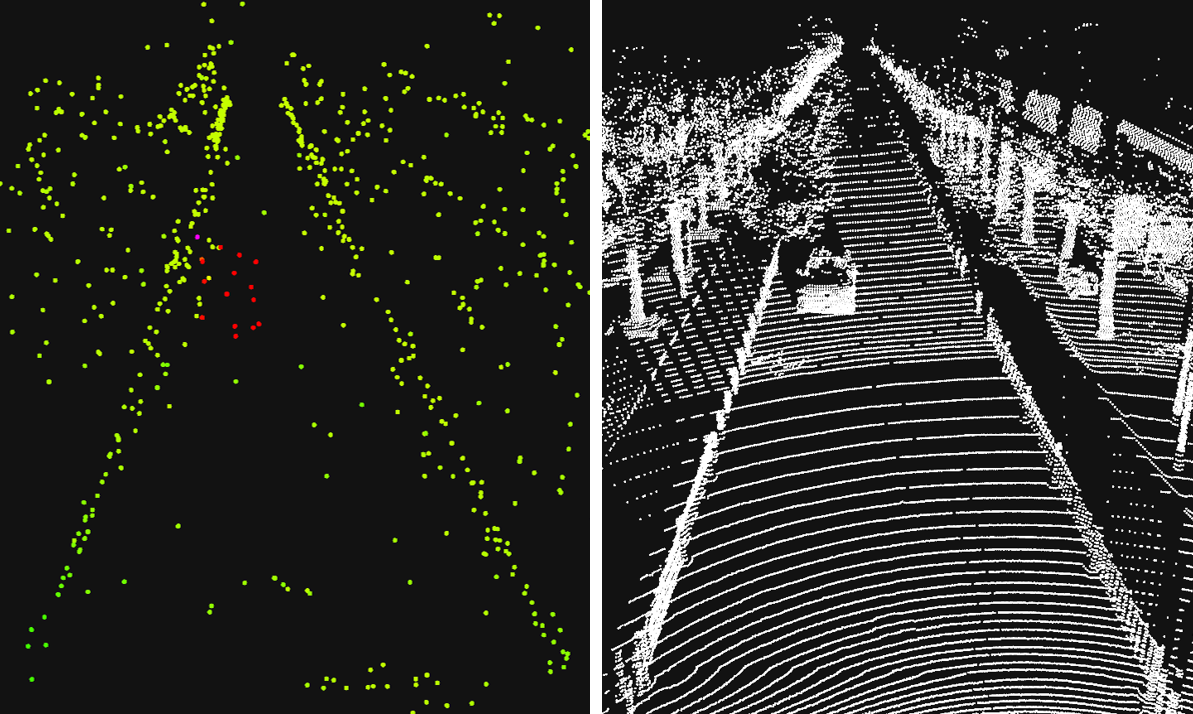}
  \caption{Comparison of 4D radar and LiDAR point clouds. Left: 4D radar point cloud colored based on the Doppler velocity. Right: LiDAR point cloud.}
  \label{fig: radar-to-lidar}
\end{figure}

Notably, the point cloud registration problems have been studied extensively from the angle of optimization. Among the widely used methods for modern SLAM systems, the most classical one is the Iterative Closest Point (ICP) algorithm \cite{besl1992method}. In particular, it iteratively establishes the nearest-neighbor correspondences between the source and target point clouds and estimates a rigid transformation that minimizes the pairwise distances. Although ICP is accurate and effective when the point clouds are clean and roughly aligned, it degrades significantly when data are noisy, sparse, or when the initial alignment is poor. To improve the registration's robustness, Generalized ICP (GICP) \cite{segal2009generalized} models each point and its local neighborhood with a covariance matrix, and it consequently combines the point-to-point and point-to-plane constraints effectively. Nevertheless, GICP still relies on iteratively built correspondences and remains sensitive to severe noise and outliers, which is exactly the case for 4D radar measurements. {More recently, Doppler ICP (DICP) \cite{hexsel2022dicp} incorporates Doppler velocity measurements directly into the ICP framework to improve registration accuracy for LiDAR data. However, it still relies on the correspondence-based ICP formulation, which may be unreliable for sparse 4D radar point clouds.} Moreover, in recent years, several methods have been developed to improve the registration's robustness under high noise and outliers. A notable method is TEASER++ \cite{yang2020teaser}, a fast yet performance-guaranteed registration framework. It decouples the rotation and the translation estimation and further employs truncated least squares for outlier rejection. However, TEASER++ relies on reliable correspondences typically obtained from feature descriptors and its performance depends on the feature quality, which is difficult to ensure for sparse and noisy 4D radar point clouds.

To avoid explicit correspondences, several distribution-level registration methods have been proposed. A well-known traditional method is the Normal Distributions Transform (NDT) \cite{magnusson2009three}, which divides the target point cloud by voxel grids. It further approximates the target distribution with a Gaussian mixture model whose components are approximated by the points within each voxel grid. Then, the source point cloud are aligned by maximizing its likelihood under the approximated Gaussian mixture model. Indeed, NDT is generally more robust than ICP and performs well on dense LiDAR data. However, its performance depends strongly on voxel size and it requires sufficiently dense local neighborhoods to estimate reliable Gaussian components. Consequently, it becomes unreliable when the point cloud is sparse, unevenly distributed, or contains large empty regions, which is the case for 4D radar data.
{Coherent Point Drift (CPD) \cite{myronenko2010point} models one point set as Gaussian mixture centroids and estimates soft probabilistic correspondences through EM optimization. GMMReg \cite{jian2010robust} represents both point sets as Gaussian mixture models and minimizes the L2 distance between the induced continuous distributions. Although these methods avoid hard nearest-neighbor correspondences, they still rely on either probabilistic correspondence estimation or continuous density overlap modeling, and their performance may degrade under the extreme sparsity and noise of 4D radar data.
Our formulation is also conceptually connected to the kernel mean embedding and Maximum Mean Discrepancy (MMD) \cite{smola2007hilbert, gretton2012kernel, muandet2017kernel}, which represent and compare probability distributions through their expectations of kernel functions in a reproducing kernel Hilbert space (RKHS). Unlike MMD, which measures distribution discrepancy through the full RKHS norm involving pairwise kernel evaluations among all points, our method matches the kernel mean embeddings at a finite set of selected centers within the generalized method of moments framework and explicitly incorporates the rigid transformation into the moment matching process. Moreover, our work provides a statistical consistency analysis under the registration setting, and integrates the method into practical radar SLAM systems.}

On the other hand, feature-based methods extract keypoints and descriptors from the point clouds to match corresponding parts of the two shapes. These methods are less sensitive to noise and partial overlaps because they do not require the exact fit of every point in the point clouds. One commonly used local descriptor is Fast Point Feature Histogram (FPFH) \cite{rusu2009fast}, which captures the geometric relationships between a point and its neighbors. Such local descriptor is fast and hence is widely used for coarse alignment. However, FPFH is sensitive to noise and not very discriminative in flat areas. Another popular descriptor is Signature of Histograms of Orientations (SHOT) \cite{salti2014shot}, which is more robust to rotation and captures more detailed local geometry. SHOT performs better than FPFH in complex scenes, but it is also more computationally expensive. These feature-based methods work well when the points have rich geometric structure and sufficient resolution. However, the 4D radar point cloud measurements are very sparse, noisy and lacks strong features, these descriptors will fail to produce reliable feature matches.

In addition, deep learning has recently been applied to point cloud registration to improve its robustness and exempt the need for hand-crafted features. In particular, these methods use neural networks to learn the features and the matching strategies directly from data. For example, Deep Closest Point (DCP) \cite{wang2019deep} uses feature embeddings and a transformer-based attention module to estimate the soft correspondences. It achieves a strong performance on dense point clouds. However, because DCP relies on the learned dense and reliable features, its generalization degrades when the point clouds are sparse and noisy, which makes it less suitable for 4D radar data. Moreover, PointNetLK \cite{aoki2019pointnetlk} combines the PointNet \cite{qi2017pointnet} architecture with a classical Lucas-Kanade optimization process. It learns the global features and updates the transformation iteratively without requiring explicit correspondences. However, the use of a numerical Jacobian would lead to instability and poor generalization. To address these limitations, PointNetLK Revisited \cite{li2021pointnetlk} replaces the numerical Jacobian with an analytical decomposition, thus improving the stability and allowing better feature reuse. Nevertheless, it still relies on supervised training on large datasets and its performance degrades when facing unseen environments. As a summary, the performance of current learning-based approaches may degrade considerably when applied to sparse and highly noisy 4D radar point clouds.

To address this issue, recent 4D radar SLAM systems have introduced radar-specific features such as measurement uncertainty, Radar Cross Section (RCS) information, and Doppler velocity to improve the registration performance \cite{li20234d, zhang20234dradarslam, kim2024radar4motion, seok2025radar4voxmap}. For instance, 4DRaSLAM employs scan-to-submap NDT augmented with radar point measurement uncertainty \cite{li20234d}, 4DRadarSLAM enhances GICP through adaptive covariance modeling \cite{zhang20234dradarslam}, Radar4Motion applies RCS-weighted ICP and accumulated scan matching \cite{kim2024radar4motion}, and Radar4VoxMap incorporates Doppler residuals into a GICP formulation \cite{seok2025radar4voxmap}. These approaches improve robustness in practice, yet they still rely on classical GICP's correspondence-based optimization or NDT’s voxelized Gaussian mixture modeling. Given the sparse and noisy characteristics of the 4D radar point clouds, achieving a robust and precise registration for existing methods still remains challenging. Hence we are motivated to design 
a robust, correspondence-free, distribution-level registration module for the sparse, noisy and unstable 4D radar point clouds and facilitate the practical use of 4D radars.
In particular, we propose a Generalized Method of Moments Registration (GMMR) framework in this work.
Our contribution is three-fold:
\begin{itemize}
     \item \textbf{A correspondence-free registration framework based on the Generalized Method of Moments.} The proposed registration framework leverages Generalized Method of Moments (GMM) \cite{hansen1982large} to align the points distributions observed in the source and target frames and estimate the transformation between the frames. It does not rely on any explicit correspondences between the points. To the best of our knowledge, this is the first work that rigorously applies GMM to point cloud registration and further in 4D radar SLAM.
    \item \textbf{The theoretical consistency guarantees.} We show the proposed registration algorithm is statistically consistent, that is, the transformation estimation converges in probability to the ``true" transformation as the number of points increases. 
    This serves as the robust guarantee of the algorithm against noise when the numbers of points in clouds are relatively sufficient.
    \item \textbf{The comprehensive evaluation on both synthetic and real-world datasets.} We conduct extensive experiments on both synthetic datasets and real-world 4D radar sequences to test our method's performance. The results show that our method consistently outperforms commonly used registration baselines and achieves an accuracy that is comparable to LiDAR-based frameworks despite the significantly lower resolution and heavier noise in radar data.
\end{itemize}

\section{Problem formulation}

Consider a static environment containing multiple objects. A 4D radar captures this scene from two different viewpoints, referred to as the source frame and the target frame. 
Consequently, two point clouds are obtained by sampling from the same surface represented in different coordinate systems.

To formulate this, let $(\mR^3,\mathcal{B}(\mR^3),\mu)$ be the probability space carried by the target frame, where $\mathcal{B}(\mR^3)$ is the Borel $\sigma$-algebra in $\mR^3$, and $\mu$ is the probability measure such that $\mu(A)$ is the probability that the point cloud lies in $A\subset \mR^3$ in the target frame. And the support of $\mu$ is $\support(\mu):=\mK$.
Moreover, let $(\mR^3,\mathcal{B}(\mR^3))$ be the measurable space in the source frame. Furthermore, let $T: \mR^3\times \SE(3) \rightarrow \mR^3$ be the rigid transformation, i.e., for $\mathbf{x}\in\mR^3$ in the source frame and $\mathbf{y}\in\mR^3$ in the target frame,
\begin{align*}
    \mathbf{y} = T(\mathbf{x};\underbrace{\left(\mathbf{R},\mathbf{t}\right)}_{\theta})=\mathbf{R}\mathbf{x}+\mathbf{t},
\end{align*} 
where $(\mathbf{R}, \mathbf{t}) \in \SE(3)$, $\mathbf{R} \in \SO(3)$ is a rotation matrix and $\mathbf{t} \in \mathbb{R}^3$ is a translation vector. The ``true" rigid transformation that transforms the source to the target frame is denoted as $T_0:=T(\;\cdot\; ;\theta_0)$, where $\theta_0:=(\mathbf{R}_0,\mathbf{t}_0)$ is the ``true" rotation matrix and translation vector. Clearly, $T(\cdot;\theta)$ is a measurable function for all $\theta\in\SE(3)$ and we denote $\mu\circ T_0$ as the probability measure induced by $T(\cdot;\theta_0)$. In particular, for any $B\subseteq\mR^3$, it holds that
$(\mu\circ T_0)(B) := \mu(T(B;\theta_0))$, where $T(B;\theta_0)$ is the image of $B$ under $T_0$.
Consequently, we model the point clouds $\mathcal{X}$ and $\mathcal{Y}$ observed in the source and target frames, respectively, as
\begin{itemize}
    \item $\mathcal{Y} = \{ y_j\in\mR^3 \}_{j=1}^M $, where $y_j$ is the realization of the random vector $\mathbf{y}_j\overset{\text{i.i.d.}}{\sim} \mu $;
    \item $ \mathcal{X} = \{ x_i \in\mR^3\}_{i=1}^N $, where $x_i$ is the realization of the random vector $\mathbf{x}_i\overset{\text{i.i.d.}}{\sim} \mu\circ T_0 $.
\end{itemize}
Equipped with the above formulation, we aim to estimate the ``true" parameter $\theta_0$ that defines the ``true" rigid transformation $T_0$ with the point clouds $\mathcal{X}$ and $\mathcal{Y}$. 

\section{GMM-based point cloud registration}
In this section, we introduce our point cloud registration method using GMM. First, we explain how to compute the generalized moments with Gaussian RBF to represent the distribution of a point cloud. Next, we present the optimization process to estimate the transformation which aligns point clouds by minimizing the difference between their moments. Finally, we introduce some implement details, including a CUDA-based parallelization strategy for efficient computation and an ego-motion–aided overlap extraction method to improve distribution consistency.

\subsection{The generalized moments and its approximations}
To register the point clouds and estimate $\theta_0$, we need to compare and match the ``features" of the distributions $\mu$ and $\mu\circ T_0$.
To this end, we choose to match the generalized moments of $\mu$ and $\mu\circ T_0$. In general, a moment can be defined as the expectation of a measurable function with respect to a probability or finite measure. 
More specifically, let $\nu$ be a nonnegative Borel measure defined on $\support(\nu):=\mK\subseteq\mathbb{R}^d$, and let $\phi_k: \mathbb{R}^d \rightarrow \mathbb{R}$ be a measurable kernel function. The $k$-th generalized moment with respect to $\nu$ is defined as
\begin{equation}
    m_k^\nu = \int_{\mK} \phi_k(\mathbf{x}) \, \nu(\mathrm{d}\mathbf{x}). \nonumber
\end{equation}
This formulation includes classical monomial moments (i.e., $\phi_k(\mathbf{x}) = x^{k_x}y^{k_y}z^{k_z}$, where $\mathbf{x}=[x,y,z]^T$ and $k_x+k_y+k_z=k$) as well as more general moments constructed from non-monomial functions. When $\nu$ is a probability measure, this reduces to expectation $m_k^\nu = \mathbb{E}_{\nu}[\phi_k(\mathbf{x})]$.

In the case of point cloud registration, we do not know the explicit distribution $\mu$. Hence, it is difficult to calculate the generalized moments explicitly. Nevertheless, since the point clouds $\mathcal{X}$ and $\mathcal{Y
}$ are i.i.d. sampled from the probability measures $\mu\circ T$ and $\mu$, we approximate the expectations using empirical averages. 
More specifically, the empirical $k$-th moment of the target point cloud $\mathcal{Y}$ takes the form
\begin{align}
    m_k^\mu=\mE_\mu[\phi_k(\mathbf{y})]\approx 
    \frac{1}{M} \sum_{j=1}^{M} \phi_k(\mathbf{y}_j):=\hat m_k(\mathcal{Y}). \label{eq: target moment}
\end{align}
Moreover, by applying the change of variable $\mathbf{y}=T(\mathbf{x};\theta_0)$, it also holds for the generalized moment $m_k^\mu$ in \eqref{eq: target moment} that 
\begin{align}
    & m_k^\mu = \int_{\mK} \phi_k(\mathbf{y})\mu(\mathrm{d}\mathbf{y})=\int_{\mK} \phi_k(T(\mathbf{x};\theta_0))\mu(\mathrm{d}T(\mathbf{x};\theta_0))\nonumber\\
    = & \int_{T_0^{-1}(\mK)} \phi_k\!\left(T\left(\mathbf{x};\theta_0\right)\right)(\mu\circ T_0)(\mathrm{d}\mathbf{x})=\mE_{\mu\circ T_0}[\phi_k(T(\mathbf{x};\theta_0))]\nonumber\\
    \approx & \frac{1}{N}\sum_{i=1}^N \phi_k\left(T\left(\mathbf{x}_i;\theta_0\right)\right)
    :=\hat{m}_k\left(T\left(\mathcal{X};\theta_0\right)\right),\label{eq: trans source moment}
\end{align}
where the approximation lies in the fact that $\{\mathbf{x}_i\}_{i=1}^N$ are i.i.d. samples from the measure $\mu\circ T_0$ and we refer $\hat{m}_k\left(T\left(\mathcal{X};\theta_0\right)\right)$ as empirical generalized moments.
As we shall see in Section \ref{sec: trans est}, we will minimize the difference between the empirical moments with respect to $\theta$ to estimate the ``true" rigid transformation $\theta_0$. 

\subsection{Kernel function design}
The choice of kernel function $\phi_k$ plays an important role and it affects the sensitivity and robustness of the final registration result. In this work, we adopt the Gaussian Radial Basis Function (RBF) as our kernel function, given by,
\begin{equation}
    \phi_k(\mathbf{x}) = e^{-(\mathbf{x} - \mathbf{c}_k)^\top \Sigma^{-1} (\mathbf{x} - \mathbf{c}_k)}, \label{eq: gaussian rbf}
\end{equation}
where $\mathbf{c}_k \in \mathbb{R}^3$ is the center of the kernel, and $\Sigma$ determines the width of the ``Gaussian bell". 
Intuitively speaking, each generalized moment with the kernel \eqref{eq: gaussian rbf} captures the local features of the distributions $\mu$ around the center $\mathbf{c}_k$, since the value of $\phi_k(\mathbf{x})$ decays exponentially fast as $\mathbf{x}$ leaves the center $\mathbf{c}_k$. And by placing the centers $\mathcal{P}_c:=\{\mathbf{c}_k\}_{k=1}^\kappa$ in the 3D space of point cloud, we can capture the global ``features" in the points distribution.

Next, we shall choose the centers $\{\mathbf{c}_k\}_{k=1}^\kappa$ and $\Sigma$ to ensure the effectiveness and efficiency of such ``feature capture". To this end, we adopt an adaptive strategy for selecting the centers of the Gaussian RBF kernels based on the density of the input point cloud. More specifically, if the point cloud is sparse or contains points fewer than a predefined threshold, then each point will serve as a center.
This ensures a more effective ``feature capture" using the generalized Gaussian RBF moments -- if the centers are not placed carefully when the point clouds are sparse, then most of the empirical Gaussian RBF moments would be almost zero. 
On the other hand, for denser point clouds, we apply $k$-means clustering to select a compact set of representative centers.
This ensures that the centers are placed at where the points are dense to extract local ``features" of the point cloud distribution, 
while keeping a reasonable number of centers (order of moments) so that we can have a tolerable computational cost.
After we choose the centers $\{\mathbf{c}_k\}_{k=1}^\kappa$, the parameter $\Sigma$ is determined empirically in our experiments.

\subsection{Transformation estimation} \label{sec: trans est}
We now start to construct the optimization problem that matches the moments and estimate the ``true" rigid transformation $\theta_0$. 
Note that as expressed in \eqref{eq: target moment} and \eqref{eq: trans source moment}, both $\mE_\mu[\phi_k(\mathbf{y})]$ and $\mE_{\mu\circ T_0}[\phi_k(T(\mathbf{x};\theta_0))]$ equal to $m_k^\mu$.
Hence, the two empirical approximations $\hat{m}_k(\mathcal{Y})$ and $\hat{m}_k(T(\mathcal{X};\theta_0))$ should coincide since they both approximate the same $m_k^\mu$. Therefore, we can estimate the optimal transformation by minimizing the moment matching loss, which is the difference between the empirical moments of the transformed source and those of the target with respect to the parameter $\theta$.
In particular, we use the least-square loss 
\begin{align}
&\mathcal{L}(\theta) = \sum_{k=1}^\kappa \|\mE_{\mu\circ T}[\phi_k(T(\mathbf{x};\theta))]-\mE_\mu[\phi_k(\mathbf{y})]\|^2\nonumber\\
&\approx\sum_{k=1}^{\kappa} \left\| \hat{m}_k(T(\mathcal{X};\theta)) - \hat{m}_k(\mathcal{Y}) \right\|^2:=\mathcal{L}_{M,N}(\theta), \label{eq: cost function}
\end{align}
where $\hat{m}_k(\cdot)$ denotes the $k$-th order empirical moment of order $k$.
By substituting the expressions \eqref{eq: target moment}, \eqref{eq: trans source moment} and \eqref{eq: gaussian rbf} into \eqref{eq: cost function}, we obtain the optimization that we actually solve to estimate the rigid transformation $\theta_0$
\begin{align}
&\min_{\theta\in\mathscr{B}(\eta)} \mathcal{L}_{M,N}(\theta)\!=\!\sum_{k=1}^{\kappa}  [ \frac{1}{N}\!\sum_{i=1}^{N} 
e^{-(\mathbf{R} \mathbf{x}_i + \mathbf{t} - \mathbf{c}_k)^\top \Sigma^{-1} (\mathbf{R} \mathbf{x}_i + \mathbf{t} - \mathbf{c}_k)}  \nonumber \\
& \qquad \qquad  - \frac{1}{M} \sum_{j=1}^{M} e^{-(\mathbf{y}_j - \mathbf{c}_k)^\top \Sigma^{-1} (\mathbf{y}_j - \mathbf{c}_k)} ]^2,\label{eq: cost function detailed}
\end{align}
where $\mathscr{B}(\eta):=\{\theta\:|\:(\mathbf{R},\mathbf{t})\in\SE(3),\|\mathbf{t}\|^2\le\eta\}$ is the compact feasible domain for $\theta$, and $\eta$ is a pre-defined constant that can be arbitrarily large.

Suppose $\hat\theta_{M,N}$ is the minimizer of \eqref{eq: cost function detailed},  we would like to state the consistency of such estimate, which we refer to the following theorem. This is the first robust guarantee of the algorithm against noise when the numbers of points in clouds are relatively sufficient.
To this end, we first present the following lemma.
\begin{lemma}\label{lem:injective}
Let $\mathbf\Phi(\mathbf{x}):=[\phi_1(\mathbf{x}),\phi_2(\mathbf{x}),\ldots,\phi_\kappa(\mathbf{x})]^\top$ be a $\kappa$-dimensional vector-valued kernel function where $\phi_k$'s are defined as \eqref{eq: gaussian rbf}. If the centers $\{\mathbf{c}_k\}_{k=1}^\kappa$ do not lie in any hyperplane in $\mR^3$, then $\mathbf\Phi$ is injective.
\end{lemma}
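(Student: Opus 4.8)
The plan is to prove injectivity directly: assume $\mathbf{\Phi}(\mathbf{x}) = \mathbf{\Phi}(\mathbf{x}')$ for two points $\mathbf{x},\mathbf{x}'\in\mR^3$ and deduce $\mathbf{x}=\mathbf{x}'$. First I would exploit the strict monotonicity (hence injectivity) of the scalar map $t\mapsto e^{-t}$, so that componentwise equality $\phi_k(\mathbf{x})=\phi_k(\mathbf{x}')$ is equivalent to equality of the quadratic exponents $(\mathbf{x}-\mathbf{c}_k)^\top \Sigma^{-1}(\mathbf{x}-\mathbf{c}_k) = (\mathbf{x}'-\mathbf{c}_k)^\top \Sigma^{-1}(\mathbf{x}'-\mathbf{c}_k)$ for every $k=1,\ldots,\kappa$. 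This converts a transcendental condition into a family of quadratic identities that I can manipulate by elementary linear algebra.

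Next I would expand both quadratic forms. The purely center-dependent terms $\mathbf{c}_k^\top \Sigma^{-1}\mathbf{c}_k$ appear on both sides and cancel; after subtracting, each index $k$ contributes a single affine-linear equation of the form $2\,\mathbf{c}_k^\top \Sigma^{-1}(\mathbf{x}-\mathbf{x}') = \mathbf{x}^\top \Sigma^{-1}\mathbf{x} - (\mathbf{x}')^\top \Sigma^{-1}\mathbf{x}'$. The key observation is that the right-hand side is a scalar $a$ that does not depend on $k$. Writing $\mathbf{w}:=\Sigma^{-1}(\mathbf{x}-\mathbf{x}')$, this says that every center satisfies $\mathbf{c}_k^\top \mathbf{w} = a/2$, i.e.\ all $\kappa$ centers lie on the common level set $\{\mathbf{z}\in\mR^3 : \mathbf{z}^\top \mathbf{w} = a/2\}$.

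Then I would invoke the non-coplanarity hypothesis. If $\mathbf{w}\neq\mathbf{0}$, the level set $\{\mathbf{z} : \mathbf{z}^\top \mathbf{w} = a/2\}$ is a genuine hyperplane in $\mR^3$ containing all the centers, contradicting the assumption that $\{\mathbf{c}_k\}_{k=1}^\kappa$ lie in no hyperplane. Hence $\mathbf{w}=\mathbf{0}$. Since $\Sigma$ is positive definite and therefore $\Sigma^{-1}$ invertible, $\mathbf{w}=\Sigma^{-1}(\mathbf{x}-\mathbf{x}')=\mathbf{0}$ forces $\mathbf{x}-\mathbf{x}'=\mathbf{0}$, which gives the desired injectivity.

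The step I expect to be the crux is the translation of ``the centers lie in no hyperplane'' into ``$\mathbf{w}=\mathbf{0}$''. To make this airtight I would use the affine notion of hyperplane, namely a level set $\mathbf{z}^\top \mathbf{w}=b$ with $\mathbf{w}\neq\mathbf{0}$ and arbitrary offset $b$, so that the particular value of the constant $a/2$ is irrelevant and the only way to avoid a bona fide hyperplane is the degenerate case $\mathbf{w}=\mathbf{0}$. This framing also clarifies why the hypothesis is stated as non-coplanarity rather than mere distinctness of the centers: it is precisely the condition ruling out a nonzero $\mathbf{w}$, and it implicitly requires at least four centers affinely spanning $\mR^3$.
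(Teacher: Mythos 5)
Your proposal is correct and takes essentially the same route as the paper's proof: both reduce equality of the Gaussian kernel values to equality of the quadratic exponents, expand to show that every center satisfies a single affine-linear equation $\mathbf{c}_k^\top \Sigma^{-1}(\mathbf{x}-\mathbf{x}') = \mathrm{const}$ with constant independent of $k$, and invoke the non-coplanarity hypothesis to force $\Sigma^{-1}(\mathbf{x}-\mathbf{x}')=\mathbf{0}$, hence $\mathbf{x}=\mathbf{x}'$. The only difference is presentational --- the paper frames it as a contradiction starting from $\Delta\mathbf{x}:=\mathbf{x}_1-\mathbf{x}_2\neq\mathbf{0}$, while you argue directly that the normal vector $\mathbf{w}$ must vanish; your added remarks (strict monotonicity of $t\mapsto e^{-t}$, invertibility of $\Sigma^{-1}$, and the observation that the hypothesis implicitly requires four affinely spanning centers) only make the same argument more explicit.
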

\begin{proof}
We use contradiction proof. Assume $\exists \mathbf{x}_1\neq \mathbf{x}_2$ such that $\mathbf\Phi(\mathbf{x}_1)=\mathbf\Phi(\mathbf{x}_2)$. This implies $\|\mathbf{x}_1-\mathbf{c}_k\|_{\Sigma^{-1}}^2=\|\mathbf{x}_2-\mathbf{c}_k\|_{\Sigma^{-1}}^2$ holds for $k=1,\ldots,\kappa$. And it follows that $2\Delta\mathbf{x}^\top\Sigma^{-1}(\mathbf{x}_2-\mathbf{c}_k)+\Delta\mathbf{x}^\top\Sigma^{-1}\Delta\mathbf{x}$, where $\Delta \mathbf{x}:=\mathbf{x}_1-\mathbf{x}_2=0$ for all $k=1,\ldots,\kappa$.
This implies $\mathbf{b}^\top\mathbf{c}_k=\mbox{constant}$ for all $k=1,\ldots,\kappa$, where $\mathbf{b}=\Sigma^{-1}\Delta\mathbf{x}$ is a nonzero constant vector. Hence $\mathbf{c}_k$ lives in a hyperplane, which contradicts the assumption.
\end{proof}
Now we are ready to present the main theorem of this work.
\begin{theorem}[Statistical consistency of the estimator]\label{thm: main_thm}
Suppose that the centers $\{\mathbf{c}_k\}_{k=1}^\kappa$ do not lie in any hyperplane in $\mR^3$. Then, the estimator $\hat{\theta}_{M,N}\pconverge\theta_0$ as $M\rightarrow\infty$ and $N\rightarrow\infty$.
\end{theorem}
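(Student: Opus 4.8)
The plan is to treat $\hat\theta_{M,N}$ as an M-estimator (extremum estimator) and invoke the classical consistency theorem for such estimators, which requires three ingredients: (i) a compact parameter space, (ii) a population objective $\mathcal{L}(\theta)$ uniquely minimized at $\theta_0$, and (iii) uniform convergence of the empirical objective $\mathcal{L}_{M,N}$ to $\mathcal{L}$ in probability. Ingredient (i) is immediate, since $\SO(3)$ is compact and the translation is confined to $\|\mathbf{t}\|^2\le\eta$, so $\mathscr{B}(\eta)$ is compact. For the population objective I take $\mathcal{L}(\theta)=\sum_{k=1}^\kappa (g_k(\theta)-m_k^\mu)^2$ with $g_k(\theta):=\mE_{\mu\circ T_0}[\phi_k(T(\mathbf{x};\theta))]$; the change-of-variables identity \eqref{eq: trans source moment} gives $g_k(\theta_0)=m_k^\mu$, so $\mathcal{L}(\theta_0)=0=\min_\theta\mathcal{L}(\theta)$ and $\theta_0$ is a minimizer.

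First I would establish (iii). Each target moment $\hat m_k(\mathcal{Y})$ is an average of i.i.d. terms bounded by $1$, hence $\hat m_k(\mathcal{Y})\pconverge m_k^\mu$ by the law of large numbers. For the transformed source moments I need convergence uniformly in $\theta$: the map $(\mathbf{x},\theta)\mapsto\phi_k(T(\mathbf{x};\theta))$ is continuous in $\theta$ for every $\mathbf{x}$ and is dominated by the constant envelope $1$, which is $(\mu\circ T_0)$-integrable, so a uniform law of large numbers yields $\sup_{\theta\in\mathscr{B}(\eta)}|\hat m_k(T(\mathcal{X};\theta))-g_k(\theta)|\pconverge 0$. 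Since $\mathcal{L}_{M,N}$ and $\mathcal{L}$ are the same fixed quadratic function of finitely many uniformly bounded moments, this transfers to $\sup_{\theta}|\mathcal{L}_{M,N}(\theta)-\mathcal{L}(\theta)|\pconverge 0$, and the same domination shows $\mathcal{L}$ is continuous.

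The crux — and the step I expect to be the main obstacle — is the identification in (ii): that $\mathcal{L}(\theta)=0$ forces $\theta=\theta_0$. Writing the residual rigid motion $S:=T(\,\cdot\,;\theta)\circ T_0^{-1}$, a change of variables recasts $\mathcal{L}(\theta)=0$ as the system $\int \phi_k(S\mathbf{y})\,\mu(\mathrm{d}\mathbf{y})=\int\phi_k(\mathbf{y})\,\mu(\mathrm{d}\mathbf{y})$ for $k=1,\dots,\kappa$, so the goal becomes $S=\identitymap$. Here the hypothesis that the centers avoid any common hyperplane is essential: it forces the $\mathbf{c}_k$ to affinely span $\mR^3$, so the Gaussian-RBF features probe the distribution in all three directions and none of the six degrees of freedom of $S\in\SE(3)$ are left free (coplanar centers would, for example, fail to constrain an out-of-plane reflection or translation). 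I would exploit the strictly positive definite, characteristic nature of the Gaussian kernel together with the real-analyticity of $\mathbf{c}\mapsto\int e^{-(\mathbf{y}-\mathbf{c})^\top\Sigma^{-1}(\mathbf{y}-\mathbf{c})}\mu(\mathrm{d}\mathbf{y})$ to argue that matched moments at non-coplanar centers cannot persist for a nontrivial $S$. Making this global injectivity rigorous — excluding accidental coincidences for $S$ far from the identity and well-separating the minimizer so that $\inf_{d(\theta,\theta_0)\ge\varepsilon}\mathcal{L}(\theta)>0$ — is the delicate part, and it must tacitly rule out measures $\mu$ possessing rotational or reflective symmetries, for which $\theta_0$ is genuinely unrecoverable.

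With (i)–(iii) in hand, the standard argmin-consistency theorem for extremum estimators delivers $\hat\theta_{M,N}\pconverge\theta_0$ as $M\to\infty$ and $N\to\infty$, which is exactly the claim.
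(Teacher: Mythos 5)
Your skeleton is the same as the paper's. The paper phrases the argument as Generalized Method of Moments consistency (Hansen), introducing a Bernoulli tag $z_1$ for source-versus-target and writing $\mathcal{L}(\theta)=\|\mE[\mathbf{g}(\mathbf{z};\theta)]\|^2$, and then checks exactly your (i)--(iii): compactness of $\mathscr{B}(\eta)$, continuity of the moment function in $\theta$, the bounded envelope $\|\mathbf\Phi(\mathbf{x})\|\le\sqrt{\kappa}$ (these two are what license the uniform law of large numbers you invoke), and identification. Your handling of (i) and (iii) is equivalent in substance to the paper's. The genuine gap is that you never prove (ii), and (ii) is the only place where the hypothesis on the centers enters; you explicitly leave it as ``the delicate part,'' gesturing at characteristic-kernel and real-analyticity arguments without carrying them out. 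The paper closes this step with a short elementary lemma (Lemma \ref{lem:injective}): if $\mathbf\Phi(\mathbf{x}_1)=\mathbf\Phi(\mathbf{x}_2)$ with $\mathbf{x}_1\neq\mathbf{x}_2$, then equating the quadratic exponents for every $k$ gives $(\Sigma^{-1}(\mathbf{x}_1-\mathbf{x}_2))^\top\mathbf{c}_k=\mbox{constant}$, i.e.\ all centers lie in a hyperplane, contradicting the hypothesis; hence $\mathbf\Phi$ is injective. Writing $S=T_\theta\circ T_0^{-1}$ as you do, the paper then passes from $\int\mathbf\Phi(S(\mathbf{y}))\,\mu(\mathrm{d}\mathbf{y})=\int\mathbf\Phi(\mathbf{y})\,\mu(\mathrm{d}\mathbf{y})$ to $\mathbf\Phi\circ S=\mathbf\Phi$ $\mu$-a.s., and injectivity forces $S=\identitymap$, i.e.\ $\theta=\theta_0$. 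As a proof attempt, then, your proposal is incomplete precisely at its crux, and the machinery you propose for filling it is heavier than what the paper actually uses.

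That said, your suspicion about this step is well-founded, and it cuts against the paper as well: the inference from equality of the $\kappa$ integrals to $\mu$-a.s.\ equality of the integrands is not valid in general, since the integrands $\phi_k\circ S-\phi_k$ are signed and their integrals can vanish by cancellation. Your symmetry observation makes this concrete: if $\mu$ is invariant under a nontrivial rigid motion $S$ (say, uniform on a sphere and $S$ a rotation about its center), then all moment conditions hold with $T_\theta=S\circ T_0\neq T_0$, so identification genuinely fails and some asymmetry assumption on $\mu$ is needed for the theorem as stated. So the accurate summary is: you correctly reproduced the consistency framework and correctly located the hard step, but you did not supply the identification argument the paper gives (non-coplanar centers $\Rightarrow$ injective $\mathbf\Phi$), while the concern you raised in its place exposes a real defect in the paper's own bridge from integral equality to pointwise equality.
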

\begin{proof}
    Let $ z_1 \sim \text{Bernoulli}\left( \frac{N}{N + M} \right) $ be a binary random variable indicating whether a sample is drawn from the source or target distribution. Let $ \mathbf{z}_2 \in \mathbb{R}^3 $ be a spatial point such that $ \mathbf{z}_2 \sim \mu\circ T_0 $ when $ z_1 = 0 $, and $ \mathbf{z}_2 \sim \mu$ when $ z_1 = 1 $.
Further denote $\mathbf{z}:=[z_1,\mathbf{z}_2^\top]^\top$.
Furthermore, let $\mathbf{g}:\mR^4\times \SE(3)\rightarrow \mR^\kappa$ be
\begin{equation*}
\mathbf{g}(\mathbf{z};\theta) = (1 - z_1) \cdot \Phi(\mathbf{R} \mathbf{z}_2 + \mathbf{t}) \cdot \frac{N + M}{N} - z_1 \cdot \Phi(\mathbf{z}_2) \cdot \frac{N + M}{M}.
\end{equation*}
Next, take expectation of the above function with respect to $\mathbf{z}$, we get
\begin{align*}
    &\mathbb{E}[\mathbf{g}(\mathbf{z}, \theta)] = \frac{N+M}{N}\mE[\mathbf{\Phi}(\mathbf{R} \mathbf{z}_2 + \mathbf{t})|z_1=0]\cdot\underbrace{\mP(z_1=0)}_{\frac{N}{N+M}}\\
    &+\frac{N+M}{N}\mE[\mathbf{\Phi}(\mathbf{R} \mathbf{z}_2 + \mathbf{t})|z_1=1]\cdot 0\cdot\mP(z_1=1)\\
    &-\frac{N+M}{M}\mE[\mathbf{\Phi}(\mathbf{z}_2)|z_1=0]\cdot 0\cdot\mP(z_1=0)\\
    &-\frac{N+M}{M}\mE[\mathbf{\Phi}(\mathbf{z}_2)|z_1=1]\cdot\underbrace{\mP(z_1=1)}_{\frac{M}{N+M}}\\
    &=\mE_{\mu\circ T_0}[\mathbf{\Phi}(T(\mathbf{x};\theta))]-\mE_\mu[\mathbf{\Phi}(\mathbf{y})].
\end{align*}
Consequently, the ``original" objective function $\mathcal{L}(\theta)$ in \eqref{eq: cost function} that we are minimizing can be rewritten into the following compact form
$\mathcal{L}(\theta)=\|\mE[\mathbf{g}(\mathbf{z};\theta)]\|^2$.
This falls exactly into the scope of GMM \cite{hansen1982large}, where we approximate the expectation with empirical mean. Hence we can prove the consistency by checking whether the consistency conditions for GMM are satisfied. In particular,

\begin{itemize}
    \item The parameter domain $\theta\in\mathscr{B}(\eta)$ is compact. Recall the definition of $\theta\in\mathscr{B}(\eta)$, the space $\SO(3)$ for rotation matrix $\mathbf{R}$ is clearly compact. Moreover, we force $\|\mathbf{t}\|^2\le\eta$ in $\theta\in\mathscr{B}(\eta)$ because in practice the translation is often bounded to a known range, and hence the statement follows.
    \item $\mathbf{g}(\mathbf{z}; \theta)$ is continuous at each $\theta$ with probability one. This holds from the fact that the kernel function $\mathbf\Phi$ is continuous, and the transformation $T(\cdot;\theta)$ acts continuously on $\mathbf{z}_2 \in \mathbb{R}^3$. Therefore, for every realization of $\mathbf{z}$, the mapping $T \mapsto \mathbf{g}(\mathbf{z};\theta)$ is continuous.
    \item $\mathbb{E}[\sup_{\theta \in \mathscr{B}(\eta)} \|g(\mathbf{z};\theta)\|] < \infty$. This holds from the fact that the kernel function $\Phi$ is bounded. Specifically, since $\Phi$ consists of $K$ Gaussian radial basis functions, each satisfying $0 < \phi_k(\mathbf{x}) \le 1$. Hence $\| \Phi(\mathbf{x}) \| \le \sqrt{K}$. In addition, $z_1$ is either 0 or 1 since it is Bernoulli distributed. These together give the statement.
    \item $\mE[\mathbf{g}(\mathbf{z};\theta)]=0$ has a unique solution $\theta_0$.
    Let $\mE[\mathbf{g}(\mathbf{z};\theta)]=0$ and this implies 
    \begin{equation*}
        \mE[\mathbf{g}(\mathbf{z};\theta)]=\mE_{\mu\circ T_0}[\mathbf{\Phi}(T(\mathbf{x};\theta))]-\mE_\mu[\mathbf{\Phi}(\mathbf{y})]=0.
    \end{equation*}
     On the other hand, by the definition of $\mu\circ T_0$, we can write $\mE_{\mu\circ T_0}[\Phi(T(\mathbf{x};\theta))]$ as
    \begin{align*}
        &\mE_{\mu\circ T_0}[\Phi(T(\mathbf{x};\theta))] \\
        &=\int_{T_0^{-1}(\mK)}\mathbf\Phi(T(\mathbf{x};\theta))(\mu\circ T_0)(\mathrm d\mathbf{x}) \\
        &=\int_{\mK}\mathbf\Phi(T(T_0^{-1}(\mathbf{y});\theta))\mu(\mathrm d\mathbf{y}),
    \end{align*}
    which further equals to $\mE_\mu[\mathbf\Phi(\mathbf y)]=\int_{\mK}\mathbf\Phi(\mathbf y)\mu(\mathrm{d}\mathbf y)$.
    This implies $\mathbf\Phi(T(T_0^{-1}(\:\cdot\:);\theta))=\mathbf\Phi(\cdot)$, $\mu$-a.s.. By Lemma \ref{lem:injective}, this further implies $T_\theta\circ T_0^{-1}=\identitymap$ and hence $\theta = \theta_0$.
\end{itemize}
Therefore, all of the consistency conditions for GMM are satisfied and we have $\hat{\theta}_{M,N}\pconverge \theta_0$ as $M,N\rightarrow\infty$.
\end{proof}

Our objective function $\mathcal{L}_{M,N}(\theta)$ is differentiable with respect to the parameter $\theta$. To efficiently minimize $\mathcal{L}_{M,N}(\theta)$, we employ the Broyden-Fletcher-Goldfarb-Shanno (BFGS) quasi-Newton method \cite{dennis1977quasi, head1985broyden}.
{Although $\mathcal{L}_{M,N}(\theta)$ is nonconvex, the risk of convergence to local minima could be reduced in practice, since the inter-frame motion is typically small and a good initial guess is available from ego-motion estimation or the transformation from the previous frame. Moreover, we parameterize the rotation matrix $\mathbf{R} \in \SO(3)$ using Euler angles, which converts the constrained optimization over rotation matrices into an unconstrained optimization by avoiding explicit orthogonality constraints and offering relatively simple analytical derivatives. While gimbal lock may occur near $\pm 90^\circ$ pitch, this is unlikely in typical ground robot scenarios with small inter-frame rotations and predominantly planar motion.}
In addition, the expression of gradient $\frac{\partial}{\partial \theta}\mathcal{L}_{M,N}(\theta)$ is pre-computed via chain rule. That is, the derivative of the empirical moment terms with respect to the Euler angles $\psi_i$ takes the form $\frac{\partial}{\partial \psi_i}\hat{m}_k(T(\mathcal{X};\theta)) = \frac{\partial}{\partial \mathbf{R}}\hat{m}_k(T(\mathcal{X};\theta))\frac{\partial }{\partial \psi_i}\mathbf{R}$.
{
The iterative BFGS optimization terminates when one of the following criteria is met: (i) the number of iterations exceeds a maximum number $N_{\max}$; (ii) the norm of the gradient falls below a gradient convergence threshold $\epsilon_g$, i.e., $\|\nabla_\theta \mathcal{L}_{M,N}(\theta)\| \leq \epsilon_g$; or (iii) the incremental transformation update between consecutive iterations becomes sufficiently small, i.e.,
\begin{equation}
    \|\Delta\mathbf{t}\| \leq \epsilon_t \quad \text{and} \quad \arccos \frac{\trace(\Delta\mathbf{R})-1}{2} \leq \epsilon_r, \nonumber
\end{equation}
where $\Delta\mathbf{R} \in \SO(3)$ and $\Delta\mathbf{t} \in \mathbb{R}^3$ are the rotation and translation components of the incremental transformation update, $\epsilon_t$ is the translation convergence threshold, and $\epsilon_r$ is the rotation convergence threshold.
}
In summary, the transformation estimation process is shown in Algorithm \ref{alg: gmmr}.

\SetKwFor{ParForEach}{forall}{do in parallel}{end for}
\begin{algorithm}[!htpb]
\caption{Generalized Method of Moments Registration} \label{alg: gmmr}
\KwIn{Source point cloud $\mathcal{X}$, Target point cloud $\mathcal{Y}$}
\KwOut{The estimate $\hat{\theta}_{M,N}$ of the optimal rigid transformation parameter $\theta_0$.
}

\textbf{Initialization:} \\
Allocate center points $\mathcal{P}_c:=\{\mathbf{c}_k\}_{k=1}^\kappa$ \\
\ParForEach{center $\mathbf{c}_k \in \mathcal{P}_c$}{
    \ForEach{point $\mathbf{y}_j \in \mathcal{Y}$}{
        Compute $\phi_k(\mathbf{y}_j)$
    }
    {$\hat{m}_k(\mathcal{Y}) \leftarrow \frac{1}{M} \sum_{j=1}^{M} \phi_k(\mathbf{y}_j)$}
}

\textbf{Registration:} \\
\While{not converged}{
    $ \mathcal{L}_{M,N}(\theta) \leftarrow 0 $ \\
    Initialize transformation $T(\cdot;\theta)$ \\
    \ParForEach{center $\mathbf{c}_k \in \mathcal{P}_c$}{
        \ForEach{point $\mathbf{x}_i \in \mathcal{X}$} {
            Compute $\phi_k(T(\mathbf{x}_i;\theta))$
        }
        $\hat{m}_k(T(\mathcal{X};\theta)) \leftarrow \frac{1}{N} \sum_{i=1}^{N} \phi_k(T(\mathbf{x}_i;\theta))$ \\
        $\Delta_k(\theta) \leftarrow  (\hat{m}_k(T(\mathcal{X};\theta)) - \hat{m}_k(\mathcal{Y}))^2$ \\
        Compute gradient $\nabla_\theta \Delta_k(\theta)$ \\
    }
    $ \mathcal{L}_{M,N}(\theta) \leftarrow \sum_{k=1}^{\kappa} \Delta_k(\theta)$ \\
    $ \nabla_\theta \mathcal{L}_{M,N}(\theta) \leftarrow \sum_{k=1}^{\kappa} \nabla_\theta \Delta_k(\theta)$ \\
    Update $\theta$ using BFGS method \\
}
\Return $\hat{\theta}_{M,N}$
\end{algorithm}

The proposed method is inherently robust to both noise and sparsity. 
In particular, since the effect of the potential measurement noise is intrinsically included in the point distributions, and the registration is based on the statistical properties of the distribution instead of any specific structural features like ``lines", ``planes" or ``the nearest neighbor", etc., the method is intrinsically robust to noisy point clouds.
Moreover, the choice of Gaussian RBF makes the contribution of outliers to the moments extremely small due to the exponential decay of kernel function. Hence our method also reduces the sensitivity to outliers.
For sparse points, our method avoids reliance on local statistics, which typically requires a sufficient number of points within voxel grids to obtain accurate approximations of the statistics such as mean and covariance as in NDT.
The proposed method utilizes generalized moments that is based on the entire point cloud, enabling a robust registration for sparse point clouds.

To further enhance computational efficiency, we implement the moment computations and transformation updates using CUDA parallelization, as detailed in next section.

\subsection{Practical implementation details}
\subsubsection{CUDA acceleration}

In order to improve the computational efficiency of our GMMR framework, we implement the key components of the algorithm using CUDA-based parallel computing on GPU for acceleration.

In particular, the most computationally intensive parts of our method involve
\begin{itemize}
    \item Computing the Gaussian RBF kernel values $\phi_k(\cdot)$ for all points and centers.
    \item Evaluating the moment matching loss $\mathcal{L}_{M,N}(\theta)$ and its gradient $\nabla_\theta \mathcal{L}_{M,N}$ during optimization.
\end{itemize}

By leveraging the parallel computing on GPUs, we assign each CUDA thread to compute $\phi_k(\cdot)$ for a specific center. This allows for highly parallel and efficient calculation of kernel values. Similarly, for the optimization phase, the gradient computation is also parallelized by dividing the contributions of different centers across multiple threads. This enables us to efficiently update the transformation parameters during each iteration.

Our CUDA implementation achieves a significant reduction in runtime. It improves the computational efficiency of our registration method and enables practical performance for radar point cloud registration tasks.

\subsubsection{Ego-motion-based overlap extraction}

Our registration framework assumes that the source and the target point clouds are sampled from the same underlying spatial distributions. However, different frames of the 4D radar scans may contain only partial overlap due to ego-motion and the radar’s limited Field of View (FoV). Consequently, directly applying registration on two point clouds containing large non-overlapping regions violates the assumption and may degrade the registration accuracy. To mitigate this issue, we first estimate the relative motion using Doppler-based ego-velocity estimation, inspired by \cite{kellner2013instantaneous}, and then extract the overlap region after compensating for the motion.

In particular, for each radar point in the point cloud, 4D radar provides the relative radial Doppler velocity information between the radar and the measured radar point. For a static point $\mathbf{P}_i$ in the world frame, its radial Doppler velocity $v_{r,i}$ is just the inverse of the radar's ego-velocity $\mathbf{v}_s$. Hence let $\mathbf{d}_i$ denote the unit direction vector of $\mathbf{P}_i$ in the radar frame, the radial Doppler velocity measurement of the static point $\mathbf{P}_i$ shall satisfy
\begin{equation}
v_{r,i} = -\mathbf{d}_i^{\mathsf{T}} \mathbf{v}_s.
\label{eq: doppler relation}
\end{equation}

Moreover, under a rigid mounting, radar velocity $\mathbf{v}_s$ is a function of the vehicle's linear and angular ego-velocities $(\mathbf{v}, \boldsymbol{\omega})$. More specifically,
\begin{equation}
\mathbf{v}_s = \mathbf{R}_s^{\mathsf{T}}
    \left( \mathbf{v} + 
    \boldsymbol{\omega} \times \mathbf{t}_s \right),
\label{eq: radar velocity}
\end{equation}
where $\mathbf{R}_s$ and $\mathbf{t}_s$ denote the extrinsic parameters from radar to the vehicle's ego-velocity frame. By stacking \eqref{eq: doppler relation} and \eqref{eq: radar velocity} for all static points, rejecting dynamic outliers with RANSAC, and solving a linear least-squares problem, we can obtain an estimate $(\widetilde{\mathbf{v}}, \widetilde{\boldsymbol{\omega}})$ of the vehicle's ego-velocity. {In particular, dynamic objects have an extra velocity component that is inconsistent with \eqref{eq: doppler relation}, so they appear as outliers and we can explicitly remove them by RANSAC. Since most points in a 4D radar scan are static in practice, this step reliably filters dynamic points and ensures that the point clouds used for registration consist mainly of static points. Nevertheless, in scenes with an unusually high density of dynamic objects, the static-point majority assumption required by RANSAC may be violated and the performance could degrade. We identify this as a limitation of the current framework.}

Now that given the estimated ego-velocity $(\widetilde{\mathbf{v}}, \widetilde{\boldsymbol{\omega}})$, the relative motion between two frames of radar point clouds over a time interval $\Delta t$ is approximated as
\begin{equation}
\widehat{\mathbf{T}}
= \exp\!\left(
\begin{bmatrix}
\widetilde{\boldsymbol{\omega}}\, \Delta t \\
\widetilde{\mathbf{v}}\, \Delta t
\end{bmatrix}
\right), \label{eq: ego-motion}
\end{equation}
which also provides an initial estimate of the transformation. Using such transformation, the two point clouds are mapped into a unified coordinate frame. We then identify the spatial region that is jointly observable in both scans and retain only the points falling within this region.
This ensures that the source and target point clouds are approximately sampled from the same distribution. This pre-processing step guarantees the validity of the theoretical analysis in Theorem \ref{thm: main_thm} in the practical implementation and improves the robustness and accuracy of our proposed method.

\begin{figure}
  \centering
  \includegraphics[width=0.5\textwidth]{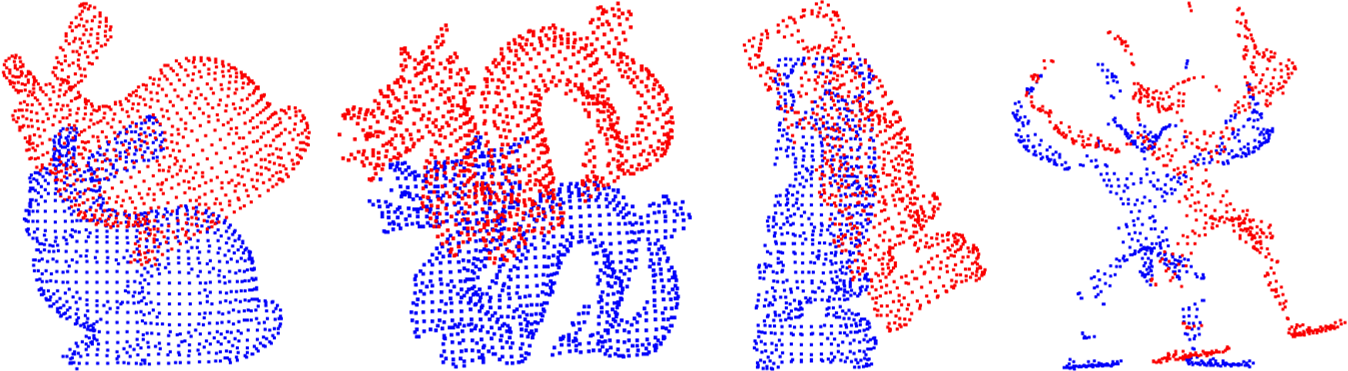}
  \caption{The source (red) and target (blue) point clouds for synthetic registration. From left to right: Bunny, Dragon, Buddha, Armadillo.}
  \label{fig: bunny_and_dragon}
\end{figure}

\begin{table*}
\centering
\caption{Translation and rotation error (m / deg) on noiseless point clouds.}
\label{tab: noiseless_results}
\begin{tabular*}{\textwidth}{@{\extracolsep{\fill}} lcccc}
\toprule
\textbf{Method} & \textbf{Bunny} & \textbf{Dragon} & \textbf{Buddha} & \textbf{Armadillo} \\
\midrule
GICP \cite{segal2009generalized} &
$3.38 \times 10^{-3} \, / \, 2.39 \times 10^{-2}$ &
$6.91 \times 10^{-2} \, / \, 0.49$ &
$9.12 \times 10^{-3} \, / \, 9.83 \times 10^{-2}$ &
$7.43 \times 10^{-4} \, / \, 3.88 \times 10^{-3}$ \\
NDT \cite{magnusson2009three} &
$4.75 \times 10^{-3} \, / \, 2.74 \times 10^{-2}$ &
$1.03 \times 10^{-2} \, / \, 6.89 \times 10^{-2}$ &
$4.17 \times 10^{-3} \, / \, 2.15 \times 10^{-2}$ &
$3.79 \times 10^{-3} \, / \, 2.04 \times 10^{-2}$ \\
{CPD} \cite{myronenko2010point} &
$6.95 \times 10^{-3} \, / \, 5.23 \times 10^{-2}$ &
$2.68 \times 10^{-3} \, / \, 1.32 \times 10^{-2}$ &
$2.72 \times 10^{-3} \, / \, 1.97 \times 10^{-2}$ &
$2.40 \times 10^{-3} \, / \, 9.94 \times 10^{-3}$ \\
{GMMReg} \cite{jian2010robust} &
$9.73 \times 10^{-8} \, / \, 0$ &
$1.49 \times 10^{-8} \, / \, 0$ &
$6.57 \times 10^{-8} \, / \, 0$ &
$5.17 \times 10^{-8} \, / \, 0$ \\
TEASER++ \cite{yang2020teaser} &
$4.66 \times 10^{-8} \, / \, 0$ &
$4.17 \times 10^{-8} \, / \, 0$ &
$8.38 \times 10^{-8} \, / \, 0$ &
$8.23 \times 10^{-8} \, / \, 0$ \\
PNLK-R \cite{li2021pointnetlk} &
$\mathbf{1.68 \times 10^{-8}} \, / \, 0$ &
$3.91 \times 10^{-8} \, / \, 4.88 \times 10^{-4}$ &
$1.00 \times 10^{-8} \, / \, 0$ &
$\mathbf{3.68 \times 10^{-8}} \, / \, 0$ \\
\textbf{Ours} &
$2.23 \times 10^{-8} \, / \, \mathbf{0}$ &
$\mathbf{1.19 \times 10^{-8}} \, / \, \mathbf{0}$ &
$\mathbf{1.85 \times 10^{-8}} \, / \, \mathbf{0}$ &
$4.47 \times 10^{-8} \, / \, \mathbf{0}$ \\
\bottomrule
\end{tabular*}

\vspace{2mm}
{\footnotesize
\textit{Note:} (1) ``PNLK-R'' denotes the PointNetLK Revisited method \cite{li2021pointnetlk}. (2) Errors smaller than $10^{-12}$ are reported as zero.
}
\end{table*}

\section{Experiments}\label{sec: experiments}

\subsection{Experimental Setup}
In our experiments, we evaluate the performance of our proposed GMMR method on both synthetic and real-world datasets. In particular, the experiments include two settings: pairwise registration on synthetic data and odometry and SLAM evaluation on real-world 4D radar datasets.

For the synthetic pairwise registration experiments, we use the \textbf{Stanford Bunny}, \textbf{Dragon}, \textbf{Happy Buddha} and \textbf{Armadillo} datasets obtained from the Stanford 3D Scanning Repository \cite{curless1996volumetric}. In particular, to emulate the characteristics of 4D radar point clouds, we down-sample each point cloud and add synthetic noise. Then we apply known rigid transformations to generate a second frame for each point cloud. Notably, in this case, the generated noisy source and target point clouds are not perfectly aligned under the ``true" transformation.

Moreover, for real-world evaluation experiments, we use the real-world \textbf{4D radar dataset} \cite{li20234d} in our previous work. In particular, the dataset consists of outdoor data sequences measured by a 4D radar sensor. And on this dataset, we first conduct scan-to-scan odometry evaluation. Moreover, we also integrate our GMMR into a 4D radar-only SLAM pipeline \cite{li20234d} to perform registration in a full SLAM system and evaluate the performance in realistic scenarios. {We further evaluate the generalization ability of our method on the public 4D radar dataset \textbf{NTU4DRadLM} \cite{zhang2023ntu4dradlm}, by integrating GMMR into another SLAM system 4DRadarSLAM \cite{zhang20234dradarslam}.}

In addition, all experiments are run on an Intel i7-1360P computer with 64GB RAM and an NVIDIA RTX3080Ti GPU.

\subsection{Synthetic experiments for Pairwise Registration} \label{sec: synthetic experiments}
We first evaluate the performance of GMMR on synthetic point clouds. In particular, we use ``bun000.ply'' from the Bunny dataset, ``dragonStandRight\_0.ply'' from the Dragon dataset, ``happyStandRight\_0.ply'' from the Buddha dataset, and ``ArmadilloBack\_0.ply'' from the Armadillo dataset. {To emulate the sparsity characteristics of 4D radar measurements, each model is down-sampled using the voxel grid filtering algorithm provided by the Point Cloud Library (PCL) \cite{rusu20113d}, resulting in point clouds containing 980, 1078, 791, and 641 points, respectively. The same point clouds are used for all compared methods below to ensure a fair comparison.} And a rigid transformation is then manually applied to generate the source and target frames, as illustrated in Fig. \ref{fig: bunny_and_dragon}. The goal is to recover the applied transformation as accurately as possible. Consequently, we evaluate the registration performance using two error metrics: the translation error (m) and the rotation error (deg). The errors are computed between the estimated transformation $\hat{\theta}_{M,N}$ and the ground truth transformation $\theta_0$ as follows
\begin{align}
\mbox{translation error} &:= \| \Trans( \mat(\theta_0)^{-1} \mat(\hat{\theta}_{M,N})) \|, \label{eq: tl error}\\
\text{rotation error} &:= \Rot( \mat(\theta_0)^{-1} \mat(\hat{\theta}_{M,N})), \label{eq: rot error}
\end{align}
where $\mat(\theta) := \begin{bmatrix}\mathbf{R} & \mathbf{t} \\ \mathbf{0}^\top & 1 \end{bmatrix}$, 
$\Trans(\mat(\theta)) = \mathbf{t}$, $\Rot(\mat(\theta)) = \arccos \frac{\trace(\mathbf{R})-1}{2}$ and $\trace(\cdot)$ denotes the trace of a matrix.

We compare our registration method with several representative baselines, including NDT and GICP from PCL \cite{rusu20113d}, CPD \cite{myronenko2010point} and GMMReg \cite{jian2010robust} from \cite{probreg}, the robust registration method TEASER++ \cite{yang2020teaser}, and the learning-based method PointNetLK Revisited \cite{li2021pointnetlk}. More specifically, for PointNetLK Revisited, we use the pretrained model released by the authors, which is trained on the ModelNet40 dataset. {In all synthetic experiments, the optimization is initialized with the identity transformation.}

\begin{figure}
  \centering
  \includegraphics[width=0.5\textwidth]{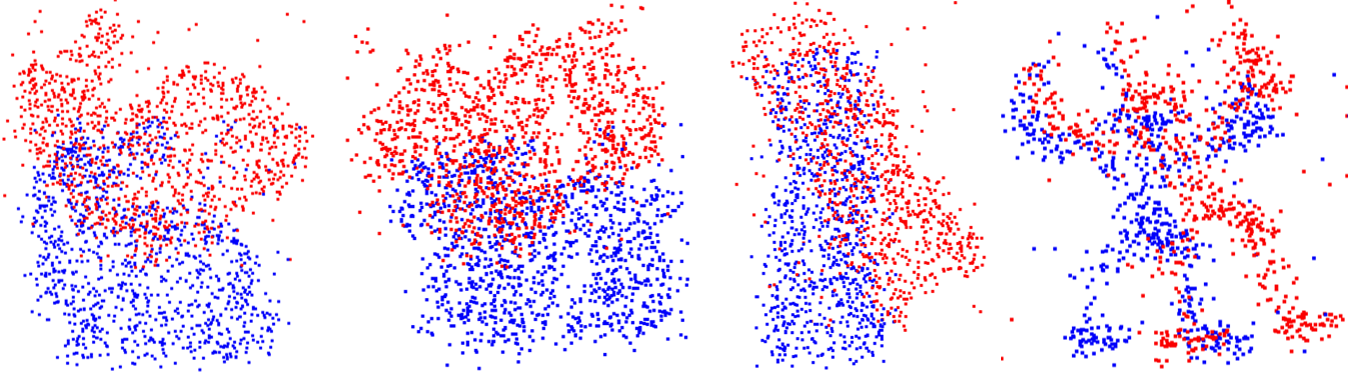}
  \caption{The noisy source (red) and target (blue) point clouds for synthetic registration. From left to right: Noisy Bunny, Noisy Dragon, Noisy Buddha, Noisy Armadillo.}
  \label{fig: noisy_bunny_and_dragon}
\end{figure}

\begin{table*}
\centering
\caption{Translation and rotation error (m / deg) on noisy point clouds.}
\label{tab: noisy_results}
\begin{tabular*}{\textwidth}{@{\extracolsep{\fill}} lcccc}
\toprule
\textbf{Method} & \textbf{Noisy Bunny} & \textbf{Noisy Dragon} & \textbf{Noisy Buddha} & \textbf{Noisy Armadillo} \\
\midrule
GICP \cite{segal2009generalized} &
$1.59 \times 10^{-2} \, / \, 0.14$ &
$0.11 \, / \, 0.53$ &
$2.92 \times 10^{-2} \, / \, 0.15$ &
$4.48 \times 10^{-2} \, / \, 0.24$ \\
NDT \cite{magnusson2009three} &
$5.63 \times 10^{-3} \, / \, 4.38 \times 10^{-2}$ &
$3.89 \times 10^{-3} \, / \, 1.54 \times 10^{-2}$ &
$9.24 \times 10^{-3} \, / \, 6.38 \times 10^{-2}$ &
$3.33 \times 10^{-3} \, / \, 1.71 \times 10^{-2}$ \\
{CPD} \cite{myronenko2010point} &
$5.25 \times 10^{-3} \, / \, 4.97 \times 10^{-2}$ &
$1.22 \times 10^{-2} \, / \, 5.28 \times 10^{-2}$ &
$7.63 \times 10^{-3} \, / \, 3.61 \times 10^{-2}$ &
$2.80 \times 10^{-2} \, / \, 0.177$ \\
{GMMReg} \cite{jian2010robust} &
$7.63 \times 10^{-3} \, / \, 0.112$ &
$1.42 \times 10^{-2} \, / \, 9.03 \times 10^{-2}$ &
$7.61 \times 10^{-3} \, / \, 5.48 \times 10^{-2}$ &
$1.14 \times 10^{-2} \, / \, 5.41 \times 10^{-2}$ \\
TEASER++ \cite{yang2020teaser} &
$3.86 \times 10^{-3} \, / \, 2.90 \times 10^{-2}$ &
$8.20 \times 10^{-3} \, / \, 3.90 \times 10^{-2}$ &
$5.11 \times 10^{-3} \, / \, 3.09 \times 10^{-2}$ &
$5.98 \times 10^{-3} \, / \, 3.18 \times 10^{-2}$ \\
PNLK-R \cite{li2021pointnetlk} &
$1.01 \times 10^{-2} \, / \, 4.48 \times 10^{-2}$ &
$1.23 \times 10^{-1} \, / \, 5.77 \times 10^{-1}$ &
$9.20 \times 10^{-2} \, / \, 4.84 \times 10^{-1}$ &
$7.32 \times 10^{-2} \, / \, 4.27 \times 10^{-1}$ \\
\textbf{Ours} &
$\mathbf{1.21 \times 10^{-3}} \, / \, \mathbf{2.28 \times 10^{-2}}$ &
$\mathbf{3.06 \times 10^{-3}} \, / \, \mathbf{9.98 \times 10^{-3}}$ &
$\mathbf{1.89 \times 10^{-3}} \, / \, \mathbf{1.72 \times 10^{-2}}$ &
$\mathbf{1.68 \times 10^{-3}} \, / \, \mathbf{1.21 \times 10^{-2}}$ \\
\bottomrule
\end{tabular*}
\end{table*}

\subsubsection{Noiseless registration}
We first evaluate our method on noiseless point clouds. The quantitative results on noiseless point clouds are summarized in Table \ref{tab: noiseless_results}.

\subsubsection{Noisy registration}
To further evaluate the robustness of our method, we conduct additional registration experiments under noisy point clouds. In this setting, random noise is drawn from the distributions to come independently twice and added to the source and the target point clouds, respectively. This makes sure that the two frames do not align perfectly under the ``true" transformation. Such simulation reflects the fact that the 4D radar measurement noise typically exhibit frame-to-frame variability. In particular, two types of artificial noise are introduced to emulate the characteristics of 4D radar:
\begin{itemize}
    \item Zero mean Gaussian additive noise: let $\mathbf{p}^s_i$ and $\mathbf{p}^t_i$ be the pre-generated source and target point clouds, respectively. That is, $\mathbf{p}^t_i=T(\mathbf{p}^s_i;\theta_0)$.
    To further generate the point clouds that are used for registration, we first add zero mean Gaussian additive noise to both source and target frames to simulate the point measurement noise, namely,
    \begin{align*}
        \mathbf{x}_i=\mathbf{p}^s_i+\mathbf{w}_i,\quad \mathbf{y}_j=\mathbf{p}^t_j+\mathbf{v}_j,
    \end{align*}
    where $\mathbf{w}_i\sim\mathcal{N}(0,0.005I)$, $\mathbf{y}_i\sim\mathcal{N}(0,0.005I)$ for all $i=1,\ldots,M$ and $j=1,\ldots,N$.
    \item Uniform outlier points: We further inject noisy points that are independently sampled from a uniform distribution whose support covers the spatial extents of the source and the target clouds. It imitates the environmental clutters. The number of outliers is 10\% of the original cloud size.
\end{itemize}
Examples of the generated noisy point clouds are shown in Fig. \ref{fig: noisy_bunny_and_dragon}. Translation and rotation errors are computed using \eqref{eq: tl error} and \eqref{eq: rot error}. Table \ref{tab: noisy_results} presents the results. It demonstrates that our GMMR maintains significantly lower errors than the baselines even in the presence of substantial noise.

Moreover, we also compare the performance of our method and baselines under varying noise levels. Fig. \ref{fig: tl_gauss}, \ref{fig: rot_gauss}, \ref{fig: tl_outlier}, \ref{fig: rot_outlier} illustrate the translation and rotation errors on the Bunny point cloud for increasing standard deviations of Gaussian noise and outlier ratios, respectively. As shown, our method consistently outperforms other baseline methods, including at extreme noise levels with up to 50\% outliers. Notably, the performance of our method only partially degrades as the noise level increases, hence it further demonstrates its robustness to noise.

\subsubsection{Partial-overlap robustness}
In addition to noise robustness, we further evaluate the performance of our method under varying levels of partial overlap. In particular, we remove a percentage of points from the source point cloud by random sampling to simulate different overlap ratios. Fig. \ref{fig: tl_overlap} and \ref{fig: rot_overlap} show the translation and rotation errors on the Bunny dataset under decreasing overlap. Consequently, TEASER++ achieves the best accuracy across different overlap ratios due to its correspondence-based formulation and simultaneous pose and correspondences process. Although our method does not achieve the accuracy of TEASER++, the performance gap still remains small. Moreover, across a wide range of overlap ratios, our method consistently outperforms other baseline methods, including extremely low overlap down to 50 \%.

Overall, these experiments demonstrate that, in spite of substantial sparsity, heavy noise of the point clouds and partial overlap, our method achieves more accurate and robust registration performance than other existing registration baselines.

\begin{figure*}
    \centering

    \begin{subfigure}{0.32\textwidth}
        \centering
        \includegraphics[width=\textwidth]{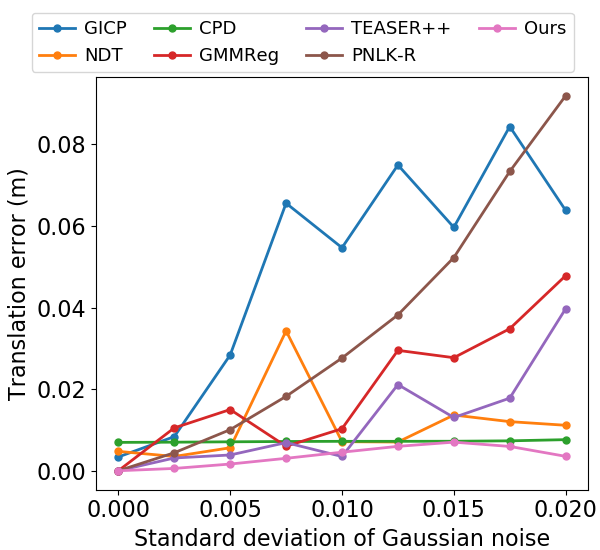}
        \caption{Translation error vs. $\sigma$}
        \label{fig: tl_gauss}
    \end{subfigure}
    \hfill
    \begin{subfigure}{0.32\textwidth}
        \centering
        \includegraphics[width=\textwidth]{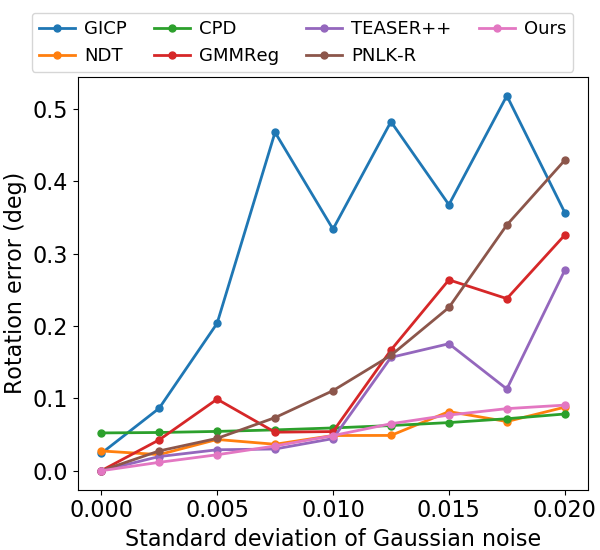}
        \caption{Rotation error vs. $\sigma$}
        \label{fig: rot_gauss}
    \end{subfigure}
    \hfill
    \begin{subfigure}{0.32\textwidth}
        \centering
        \includegraphics[width=\textwidth]{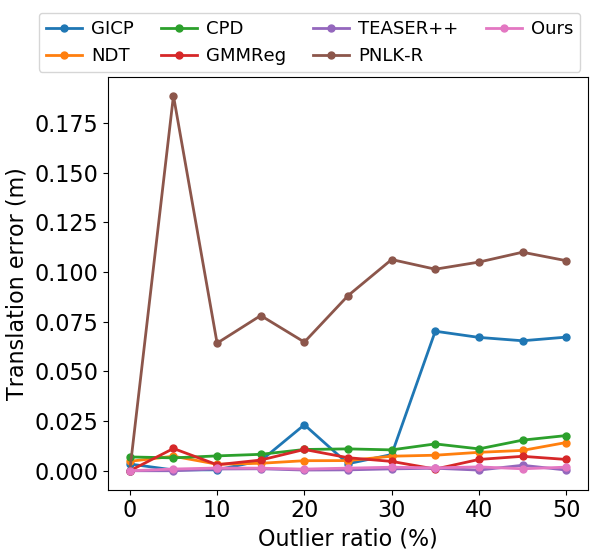}
        \caption{Translation error vs. outlier ratio}
        \label{fig: tl_outlier}
    \end{subfigure}

    \par\medskip

    \begin{subfigure}{0.32\textwidth}
        \centering
        \includegraphics[width=\textwidth]{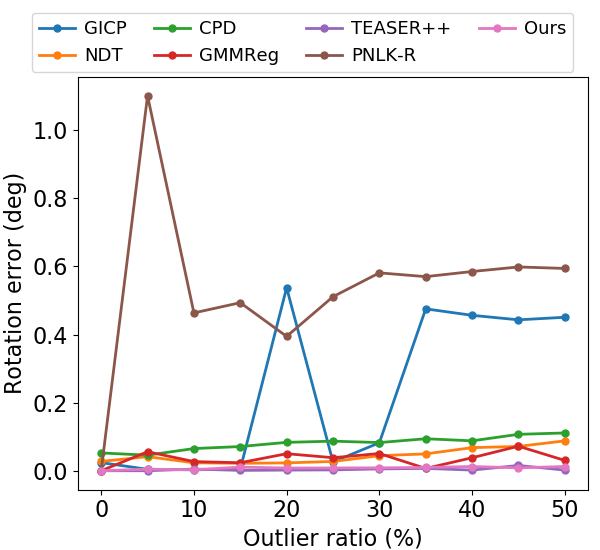}
        \caption{Rotation error vs. outlier ratio}
        \label{fig: rot_outlier}
    \end{subfigure}
    \hfill
    \begin{subfigure}{0.32\textwidth}
        \centering
        \includegraphics[width=\textwidth]{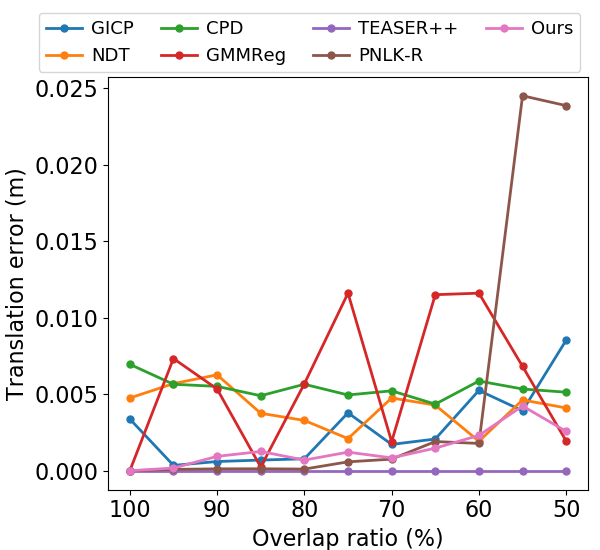}
        \caption{Translation error vs. overlap ratio}
        \label{fig: tl_overlap}
    \end{subfigure}
    \hfill
    \begin{subfigure}{0.32\textwidth}
        \centering
        \includegraphics[width=\textwidth]{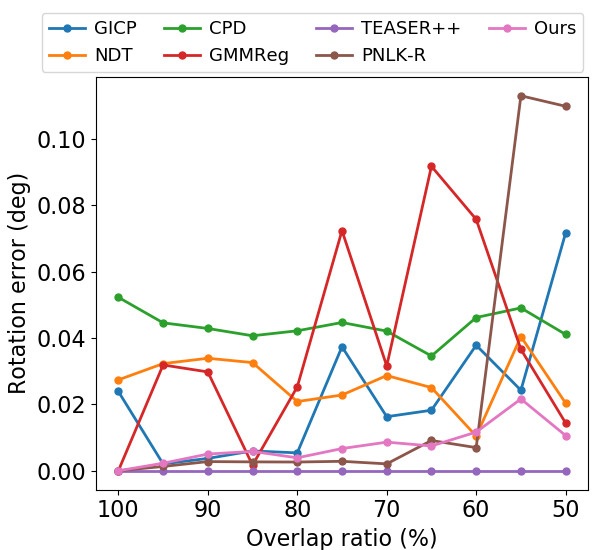}
        \caption{Rotation error vs. overlap ratio}
        \label{fig: rot_overlap}
    \end{subfigure}

    \caption{{Comparison of translation and rotation errors under different Gaussian noise, outlier ratios, and overlap ratios.}}
    \label{fig: error_comparison}
\end{figure*}

\begin{figure*}
    \centering

    \begin{subfigure}{0.32\textwidth}
        \centering
        \includegraphics[width=\textwidth]{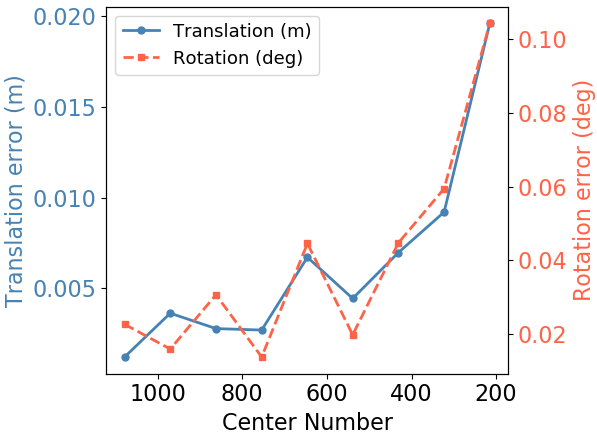}
        \caption{Registration error vs. $\kappa$}
        \label{fig: error_center}
    \end{subfigure}
    \hfill
    \begin{subfigure}{0.32\textwidth}
        \centering
        \includegraphics[width=\textwidth]{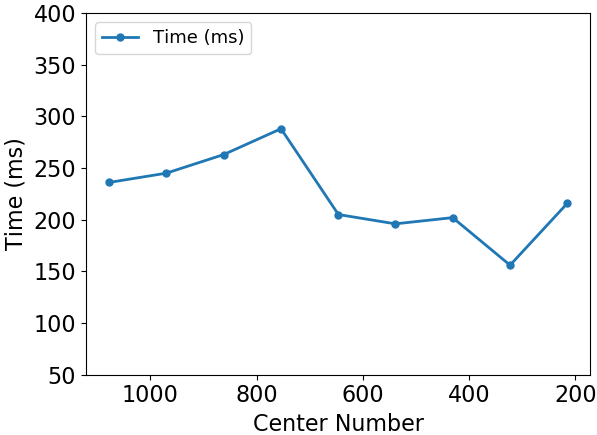}
        \caption{Runtime vs. $\kappa$}
        \label{fig: time_center}
    \end{subfigure}
    \hfill
    \begin{subfigure}{0.32\textwidth}
        \centering
        \includegraphics[width=\textwidth]{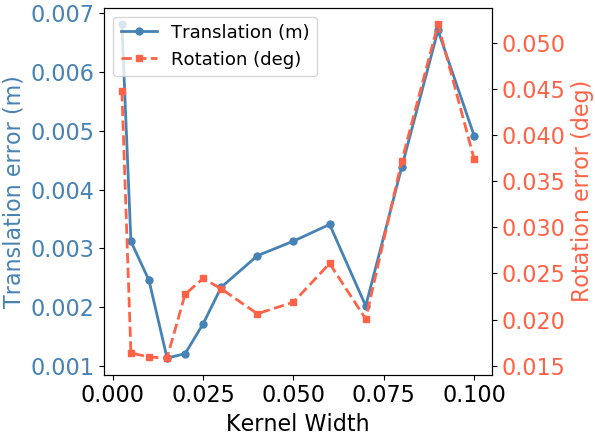}
        \caption{Registration error vs. $\Sigma$}
        \label{fig: error_width}
    \end{subfigure}

    \caption{Sensitivity analysis with respect to the number of centers $\kappa$ and the kernel width $\Sigma$.}
    \label{fig: param_sensitivity}
\end{figure*}

{\subsubsection{Hyperparameter Sensitivity}}
{We further conduct a sensitivity analysis of the number of centers $\kappa$ and the kernel width $\Sigma$ on the noisy Bunny point cloud, as shown in Fig. \ref{fig: param_sensitivity}.}

{For the number of centers $\kappa$, the registration error remains relatively stable over a broad range and increases only when $\kappa$ becomes too small to capture the global structure of the point distribution. Meanwhile, the runtime does not vary significantly with $\kappa$, since the moment computation is parallelized using CUDA: when $\kappa$ remains within the effective range supported by the GPU cores, the kernel evaluations run simultaneously, and the overall runtime is mainly dominated by the number of optimization iterations rather than the number of centers itself.}

{For the kernel width $\Sigma$, the registration error exhibits a U-shaped trend. A too small $\Sigma$ makes the Gaussian RBF kernels overly localized and provides insufficient gradient information for sparse point clouds, while a too large $\Sigma$ reduces the ability to distinguish local features. Consequently, the best registration performance is achieved within an intermediate range of $\Sigma$.}

{Overall, the results demonstrate that our method is robust to hyperparameter selection within a reasonably wide parameter range, and the parameters in the experiments are selected accordingly.}

\begin{table*}
\centering
\caption{Comparison of the relative and absolute errors (\% / deg/m / m) in different real-world scenes.}
\label{tab: quantitativeresults}
\begin{tabular*}{\textwidth}{@{\extracolsep{\fill}} lccc}
\toprule
\textbf{SLAM Method} & \textit{Campus 1} & \textit{Campus 2} & \textit{Campus 3} \\
\midrule
LiDAR SLAM \cite{kim_scan_2018, shan2018lego} &
1.96 / 0.0209 / 2.27 &
3.19 / 0.0120 / 2.57 &
2.68 / 0.0173 / 3.33 \\
\midrule
4DRaSLAM \cite{li20234d} &
2.32 / 0.0216 / 2.28 &
3.13 / 0.0201 / \textbf{3.79} &
3.06 / 0.0245 / 3.83 \\
\textbf{Ours} &
\textbf{2.01 / 0.0148 / 2.06} &
\textbf{2.64 / 0.0132} / 6.32 &
\textbf{2.62 / 0.0153 / 3.35} \\
\bottomrule
\end{tabular*}
\end{table*}

\begin{figure*}
  \centering
  \includegraphics[width=\textwidth]{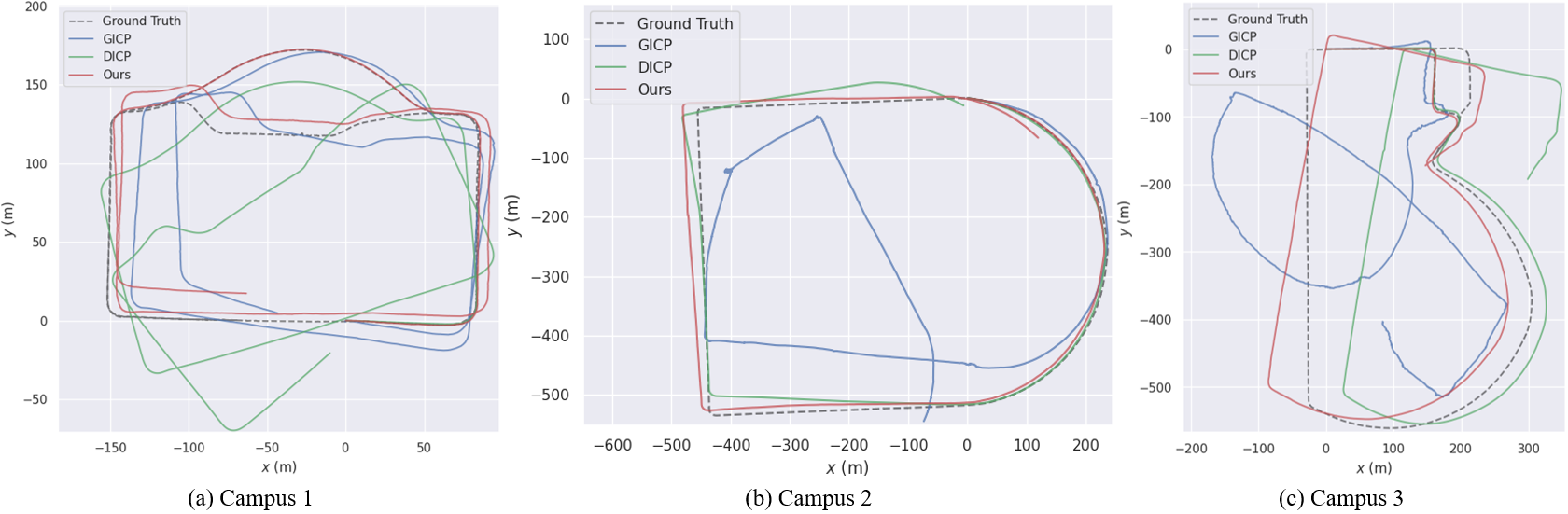}
  \caption{{Comparison of trajectories on \textit{Campus 1}, \textit{Campus 2} and \textit{Campus 3} in 4D radar dataset.}
  \label{fig: traj}}
\end{figure*}

\begin{figure*}
  \centering
  \includegraphics[width=\textwidth]{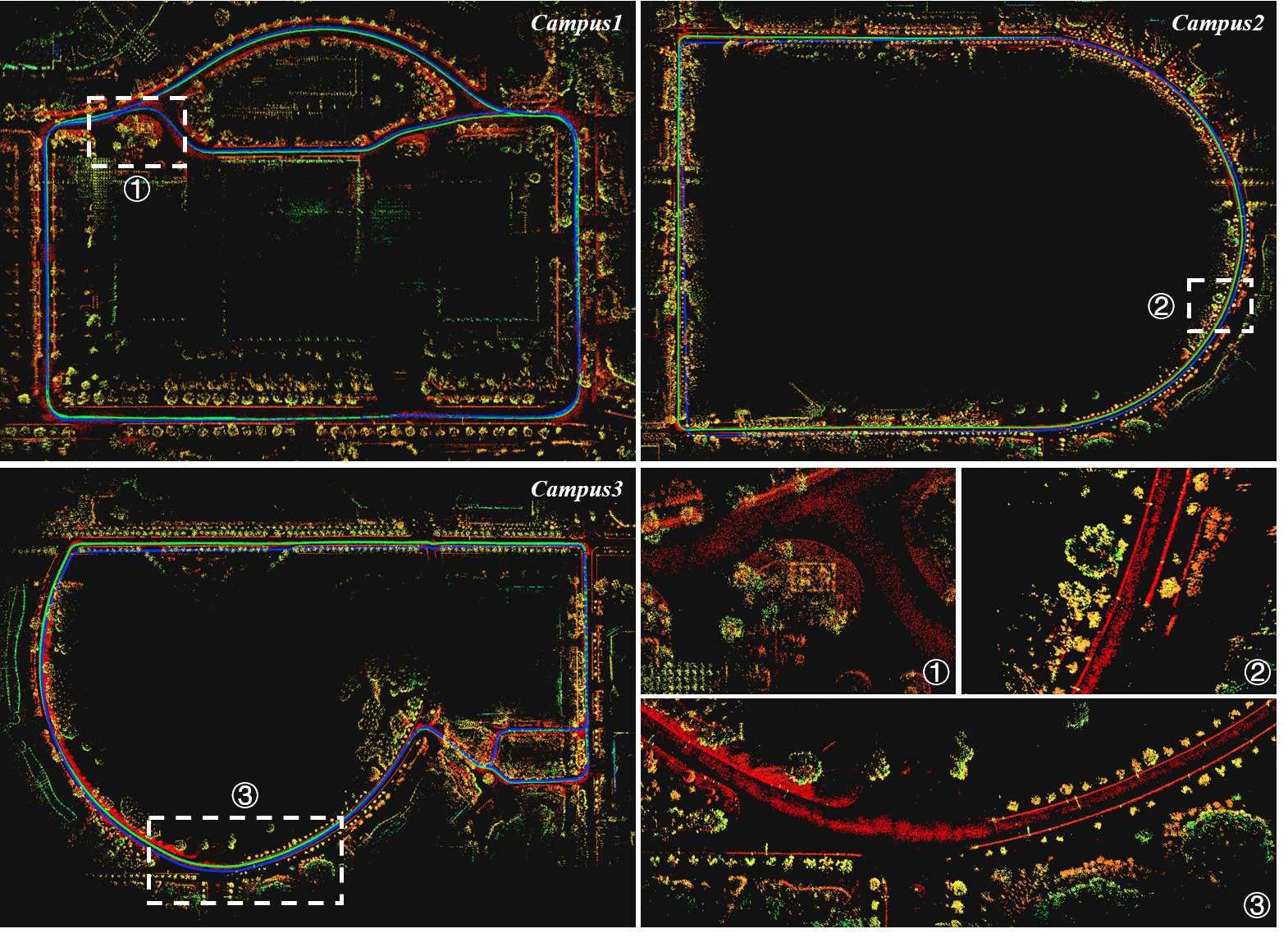}
  \caption{Reconstructed 4D radar point cloud maps of \textit{Campus 1}, \textit{Campus 2}, and \textit{Campus 3} from the 4D radar dataset using the proposed GMMR-based SLAM system. The estimated SLAM trajectories are shown in green, while the ground-truth trajectories are shown in blue.}
  \label{fig: map}
\end{figure*}

\subsection{Real-world Experiments}
\subsubsection{Scan-to-scan odometry evaluation}
To demonstrate the practical utility of our method on 4D radar data, we first test GMMR in a scan-to-scan odometry pipeline. In particular, we implement a scan-to-scan radar odometry using our method and other baselines used in Section \ref{sec: synthetic experiments}. {We also include DICP \cite{hexsel2022dicp} as a baseline, which leverages Doppler velocity measurements for registration and is evaluated on the real-world dataset only, since Doppler measurements are unavailable in the synthetic setting. The initial transformation for all methods is provided by the Doppler-based ego-motion estimation from \eqref{eq: ego-motion}.} Then we compare their performance on the real-world 4D radar dataset collected in our previous work \cite{li20234d}.

{From the experiments, we observe that NDT, CPD, GMMReg, TEASER++, and PointNetLK Revisited exhibit substantial performance degradation on real-world 4D radar scans. We conjecture that this is because NDT, CPD and GMMReg rely on stable local Gaussian statistics, soft probabilistic correspondences through EM optimization, and accurate density overlap estimation, respectively, which become unreliable under sparse and noisy radar data. TEASER++, on the other hand, depends on reliable feature correspondences, which are difficult to establish due to the lack of distinct local structures in radar point clouds. Moreover, PointNetLK Revisited is pretrained on dense and clean ModelNet40 shapes and may not generalize well to sparse and noisy radar data.} In contrast, our method yields more stable and accurate odometry, as shown in Fig. \ref{fig: traj}. For clarity, the highly inconsistent trajectories produced by the failed methods are not included in the figure. Nevertheless, our current implementation runs at a speed of approximately 158 ms per frame. It indicates that the algorithm's computational efficiency still needs further improvement, which we will consider in future work.

\subsubsection{SLAM evaluation}
\paragraph{Performance on the collected dataset:} Next, we integrate our registration method into our previous work 4DRaSLAM \cite{li20234d} to evaluate the performance in a complete SLAM system. In particular, we first pre-process the 4D radar point clouds as in \cite{li20234d} and estimate the ego-velocity. We then incorporate our scan-to-submap GMMR with overlap extraction as the scan registration module. Unlike previous pairwise registration, this experiment registers consecutive frames to submaps to build a globally consistent map. Since scan-to-submap registration is triggered only at keyframes and typically runs at a lower frequency than the raw scan rate, the overall SLAM system still remains real-time. To evaluate the performance, we compute the Relative Error (RE) in the same way as that of the KITTI odometry benchmark \cite{geiger_are_2012}. In particular, we compute the mean translation and rotation errors from length 100 m to 800 m with a 100 m increment (For more details, please refer to \cite{geiger_are_2012}). In addition, translation errors are in percentage (\%), and rotation errors are in degrees per meter (deg/m). Besides RE, we also compute the absolute error using the Root-Mean-Square Error (RMSE) of absolute trajectory error (ATE) in meters (m) to evaluate the global consistency and overall accuracy of the estimated trajectory \cite{sturm2012benchmark}.

The results are listed in Table \ref{tab: quantitativeresults} in the format of ``translation error (\%) / rotation error (deg/m) / ATE (m)". More precisely, the results show that our method effectively handles the sparsity and noise issues of 4D radar points and achieves accurate and robust pose estimations that are comparable to those of LiDAR-based SLAM \cite{kim_scan_2018, shan2018lego}. 

In addition to the trajectory accuracy, we further evaluate the mapping quality produced by our GMMR-based SLAM system. Fig. \ref{fig: map} illustrates the reconstructed 4D radar point cloud map, along with enlarged views of representative regions. The map preserves fine structural details such as trees, small vegetation island and planting strips, while maintaining strong global consistency. It demonstrates the effectiveness and accuracy of the proposed registration method despite the sparsity and noise of 4D radar measurements.

{\paragraph{Ablation study:} We further conduct an ablation experiment on the ego-motion-based overlap extraction module. The results are shown in Table \ref{tab: ablation}. Removing the overlap extraction module leads to only a slight performance degradation, suggesting that the core moment-matching objective contributes significantly to the overall performance of our method. The overlap extraction serves as a lightweight preprocessing step that helps ensure the shared-distribution assumption is satisfied in practice.}

\begin{table*}
\centering
\caption{Ablation study on the ego-motion-based overlap extraction module.}
\label{tab: ablation}
\begin{tabular*}{\textwidth}{@{\extracolsep{\fill}} lccc}
\toprule
\textbf{Method} & \textit{Campus 1} & \textit{Campus 2} & \textit{Campus 3} \\
\midrule
Ours w/o overlap extraction &
2.12 / 0.0164 / 2.13 &
2.89 / 0.0145 / 6.18 &
2.73 / 0.0187 / 3.54 \\
Ours &
2.01 / 0.0148 / 2.06 &
2.64 / 0.0132 / 6.32 &
2.62 / 0.0153 / 3.35 \\
\bottomrule
\end{tabular*}
\end{table*}

{\paragraph{Evaluation on NTU4DRadLM dataset:} To further evaluate the generalization ability of our method, we validate it on the public 4D radar dataset NTU4DRadLM \cite{zhang2023ntu4dradlm}. More specifically, we integrate our registration method into 4DRadarSLAM \cite{zhang20234dradarslam}, which was originally evaluated on this dataset. It is a full radar SLAM system that employs a scan-to-scan registration module based on Adaptive Probability Distribution GICP (APDGICP), considering the probability distribution of each point. We replace its APDGICP-based registration with our GMMR and compare the results against those reported in the original paper. For a fair comparison, the performance is evaluated using the same trajectory evaluation tool \cite{zhang2018tutorial} as used in \cite{zhang20234dradarslam}, reporting both RE and ATE. The results are listed in Table \ref{tab: ntu_results}. It demonstrates that our method maintains competitive performance on the new dataset, providing stronger evidence of its generalization ability.}

\begin{table*}
\centering
\caption{Comparison of the relative and absolute errors (\% / deg/m / m) on NTU4DRadLM dataset.}
\label{tab: ntu_results}\
\resizebox{\textwidth}{!}{
\begin{tabular}{lccccc}
\toprule
\textbf{Method} & \textit{cp} & \textit{garden} & \textit{nyl} & \textit{loop 2} & \textit{loop 3}\\
\midrule
APDGICP \cite{zhang20234dradarslam} &
3.02 / 0.0448 / 2.35 &
2.05 / 0.0309 / \textbf{2.41} &
2.85 / 0.0131 / 11.37 &
5.79 / 0.0100 / 84.88 &
\textbf{4.03} / 0.0069 / 43.67 \\
\textbf{Ours} &
\textbf{1.49 / 0.0327 / 1.14} &
\textbf{1.77 / 0.0208} / 3.89 &
\textbf{2.23 / 0.0127 / 7.54} &
\textbf{4.66 / 0.0052 / 66.19} &
4.62 / \textbf{0.0056 / 29.49} \\
\bottomrule
\end{tabular}
}
\end{table*}

\section{Conclusions}

In this work, we introduce a 4D radar point cloud registration method based on Generalized Method of Moments. In particular, our approach uses Gaussian radial basis function kernels to compute the generalized moments of the point cloud. Then the transformation is estimated through minimizing the moment matching loss. We also show the consistency of our method. The experiments demonstrate that our method achieves better accuracy and stability compared to other registration approaches. We further integrate this framework into a 4D radar-based SLAM system. The results show that our method improves the localization performance to levels that is similar to LiDAR-based SLAM.

In future work, we aim to further enhance the real-time performance of the algorithm. Moreover, we will extend the framework to support non-rigid transformations. These improvements will broaden the applicability of our method to more dynamic and complex environments.








\bibliographystyle{elsarticle-num}

\bibliography{reference}

@phdthesis{magnusson2009three,
  title={The three-dimensional normal-distributions transform: an efficient representation for registration, surface analysis, and loop detection},
  author={Magnusson, Martin},
  year={2009},
  school={{\"O}rebro universitet}
}

@inproceedings{shan2018lego,
  title={Lego-loam: Lightweight and ground-optimized lidar odometry and mapping on variable terrain},
  author={Shan, Tixiao and Englot, Brendan},
  booktitle={2018 IEEE/RSJ international conference on intelligent robots and systems (IROS)},
  pages={4758--4765},
  year={2018},
  organization={IEEE}
}

@inproceedings{kellner2013instantaneous,
  title={Instantaneous ego-motion estimation using Doppler radar},
  author={Kellner, Dominik and Barjenbruch, Michael and Klappstein, Jens and Dickmann, J{\"u}rgen and Dietmayer, Klaus},
  booktitle={16th International IEEE Conference on Intelligent Transportation Systems (ITSC 2013)},
  pages={869--874},
  year={2013},
  organization={IEEE}
}

@inproceedings{geiger_are_2012,
	title = {Are we ready for autonomous driving? {The} {KITTI} vision benchmark suite},
	shorttitle = {Are we ready for autonomous driving?},
	doi = {10.1109/CVPR.2012.6248074},
	abstract = {Today, visual recognition systems are still rarely employed in robotics applications. Perhaps one of the main reasons for this is the lack of demanding benchmarks that mimic such scenarios. In this paper, we take advantage of our autonomous driving platform to develop novel challenging benchmarks for the tasks of stereo, optical flow, visual odometry/SLAM and 3D object detection. Our recording platform is equipped with four high resolution video cameras, a Velodyne laser scanner and a state-of-the-art localization system. Our benchmarks comprise 389 stereo and optical flow image pairs, stereo visual odometry sequences of 39.2 km length, and more than 200k 3D object annotations captured in cluttered scenarios (up to 15 cars and 30 pedestrians are visible per image). Results from state-of-the-art algorithms reveal that methods ranking high on established datasets such as Middlebury perform below average when being moved outside the laboratory to the real world. Our goal is to reduce this bias by providing challenging benchmarks with novel difficulties to the computer vision community. Our benchmarks are available online at: www.cvlibs.net/datasets/kitti.},
	booktitle = {2012 {IEEE} {Conference} on {Computer} {Vision} and {Pattern} {Recognition}},
	author = {Geiger, Andreas and Lenz, Philip and Urtasun, Raquel},
	month = jun,
	year = {2012},
	note = {ISSN: 1063-6919},
	keywords = {Benchmark testing, Cameras, Measurement, Optical imaging, Optical sensors, Visualization},
	pages = {3354--3361},
	file = {IEEE Xplore Abstract Record:/home/icebreak/Zotero/storage/LI7DT4CB/6248074.html:text/html;IEEE Xplore Full Text PDF:/home/icebreak/Zotero/storage/GGM5TP2Y/Geiger 等 - 2012 - Are we ready for autonomous driving The KITTI vis.pdf:application/pdf},
}

@inproceedings{kim_scan_2018,
	title = {Scan {Context}: {Egocentric} {Spatial} {Descriptor} for {Place} {Recognition} {Within} {3D} {Point} {Cloud} {Map}},
	shorttitle = {Scan {Context}},
	doi = {10.1109/IROS.2018.8593953},
	abstract = {Compared to diverse feature detectors and descriptors used for visual scenes, describing a place using structural information is relatively less reported. Recent advances in simultaneous localization and mapping (SLAM) provides dense 3D maps of the environment and the localization is proposed by diverse sensors. Toward the global localization based on the structural information, we propose Scan Context, a non-histogram-based global descriptor from 3D Light Detection and Ranging (LiDAR) scans. Unlike previously reported methods, the proposed approach directly records a 3D structure of a visible space from a sensor and does not rely on a histogram or on prior training. In addition, this approach proposes the use of a similarity score to calculate the distance between two scan contexts and also a two-phase search algorithm to efficiently detect a loop. Scan context and its search algorithm make loop-detection invariant to LiDAR viewpoint changes so that loops can be detected in places such as reverse revisit and corner. Scan context performance has been evaluated via various benchmark datasets of 3D LiDAR scans, and the proposed method shows a sufficiently improved performance.},
	booktitle = {2018 {IEEE}/{RSJ} {International} {Conference} on {Intelligent} {Robots} and {Systems} ({IROS})},
	author = {Kim, Giseop and Kim, Ayoung},
	month = oct,
	year = {2018},
	note = {ISSN: 2153-0866},
	keywords = {Laser radar, Three-dimensional displays, Sensors, Encoding, Histograms, Shape, Visualization},
	pages = {4802--4809},
	file = {IEEE Xplore Abstract Record:/home/icebreak/Zotero/storage/QZBFE2LZ/8593953.html:text/html;IEEE Xplore Full Text PDF:/home/icebreak/Zotero/storage/5V7CZHY2/Kim 和 Kim - 2018 - Scan Context Egocentric Spatial Descriptor for Pl.pdf:application/pdf},
}

@inproceedings{sturm2012benchmark,
  title={A benchmark for the evaluation of RGB-D SLAM systems},
  author={Sturm, J{\"u}rgen and Engelhard, Nikolas and Endres, Felix and Burgard, Wolfram and Cremers, Daniel},
  booktitle={2012 IEEE/RSJ international conference on intelligent robots and systems},
  pages={573--580},
  year={2012},
  organization={IEEE}
}

@article{huang2021comprehensive,
  title={A comprehensive survey on point cloud registration},
  author={Huang, Xiaoshui and Mei, Guofeng and Zhang, Jian and Abbas, Rana},
  journal={arXiv preprint arXiv:2103.02690},
  year={2021}
}

@inproceedings{besl1992method,
  title={Method for registration of 3-D shapes},
  author={Besl, Paul J and McKay, Neil D},
  booktitle={Sensor fusion IV: control paradigms and data structures},
  volume={1611},
  pages={586--606},
  year={1992},
  organization={Spie}
}

@inproceedings{segal2009generalized,
  title={Generalized-icp.},
  author={Segal, Aleksandr and Haehnel, Dirk and Thrun, Sebastian},
  booktitle={Robotics: science and systems},
  volume={2},
  number={4},
  pages={435},
  year={2009},
  organization={Seattle, WA}
}

@inproceedings{rusu2009fast,
  title={Fast point feature histograms (FPFH) for 3D registration},
  author={Rusu, Radu Bogdan and Blodow, Nico and Beetz, Michael},
  booktitle={2009 IEEE international conference on robotics and automation},
  pages={3212--3217},
  year={2009},
  organization={IEEE}
}

@article{salti2014shot,
  title={SHOT: Unique signatures of histograms for surface and texture description},
  author={Salti, Samuele and Tombari, Federico and Di Stefano, Luigi},
  journal={Computer vision and image understanding},
  volume={125},
  pages={251--264},
  year={2014},
  publisher={Elsevier}
}

@inproceedings{aoki2019pointnetlk,
  title={Pointnetlk: Robust \& efficient point cloud registration using pointnet},
  author={Aoki, Yasuhiro and Goforth, Hunter and Srivatsan, Rangaprasad Arun and Lucey, Simon},
  booktitle={Proceedings of the IEEE/CVF conference on computer vision and pattern recognition},
  pages={7163--7172},
  year={2019}
}

@inproceedings{qi2017pointnet,
  title={Pointnet: Deep learning on point sets for 3d classification and segmentation},
  author={Qi, Charles R and Su, Hao and Mo, Kaichun and Guibas, Leonidas J},
  booktitle={Proceedings of the IEEE conference on computer vision and pattern recognition},
  pages={652--660},
  year={2017}
}

@inproceedings{wang2019deep,
  title={Deep closest point: Learning representations for point cloud registration},
  author={Wang, Yue and Solomon, Justin M},
  booktitle={Proceedings of the IEEE/CVF international conference on computer vision},
  pages={3523--3532},
  year={2019}
}

@article{dennis1977quasi,
  title={Quasi-Newton methods, motivation and theory},
  author={Dennis, Jr, John E and Mor{\'e}, Jorge J},
  journal={SIAM review},
  volume={19},
  number={1},
  pages={46--89},
  year={1977},
  publisher={SIAM}
}

@article{head1985broyden,
  title={A Broyden—Fletcher—Goldfarb—Shanno optimization procedure for molecular geometries},
  author={Head, John D and Zerner, Michael C},
  journal={Chemical physics letters},
  volume={122},
  number={3},
  pages={264--270},
  year={1985},
  publisher={Elsevier}
}

@inproceedings{curless1996volumetric,
  title={A volumetric method for building complex models from range images},
  author={Curless, Brian and Levoy, Marc},
  booktitle={Proceedings of the 23rd annual conference on Computer graphics and interactive techniques},
  pages={303--312},
  year={1996}
}

@article{li20234d,
  title={4d radar-based pose graph slam with ego-velocity pre-integration factor},
  author={Li, Xingyi and Zhang, Han and Chen, Weidong},
  journal={IEEE Robotics and Automation Letters},
  volume={8},
  number={8},
  pages={5124--5131},
  year={2023},
  publisher={IEEE}
}

@inproceedings{rusu20113d,
  title={3d is here: Point cloud library (pcl)},
  author={Rusu, Radu Bogdan and Cousins, Steve},
  booktitle={2011 IEEE international conference on robotics and automation},
  pages={1--4},
  year={2011},
  organization={IEEE}
}

@article{hansen1982large,
  title={Large sample properties of generalized method of moments estimators},
  author={Hansen, Lars Peter},
  journal={Econometrica: Journal of the econometric society},
  pages={1029--1054},
  year={1982},
  publisher={JSTOR}
}

@inproceedings{li2021pointnetlk,
  title={Pointnetlk revisited},
  author={Li, Xueqian and Pontes, Jhony Kaesemodel and Lucey, Simon},
  booktitle={Proceedings of the IEEE/CVF conference on computer vision and pattern recognition},
  pages={12763--12772},
  year={2021}
}

@inproceedings{zhang20234dradarslam,
  title={4dradarslam: A 4d imaging radar slam system for large-scale environments based on pose graph optimization},
  author={Zhang, Jun and Zhuge, Huayang and Wu, Zhenyu and Peng, Guohao and Wen, Mingxing and Liu, Yiyao and Wang, Danwei},
  booktitle={2023 IEEE International Conference on Robotics and Automation (ICRA)},
  pages={8333--8340},
  year={2023},
  organization={IEEE}
}

@article{yang2020teaser,
  title={Teaser: Fast and certifiable point cloud registration},
  author={Yang, Heng and Shi, Jingnan and Carlone, Luca},
  journal={IEEE Transactions on Robotics},
  volume={37},
  number={2},
  pages={314--333},
  year={2020},
  publisher={IEEE}
}

@article{kim2024radar4motion,
  title={Radar4Motion: IMU-Free 4D Radar Odometry with Robust Dynamic Filtering and RCS-Weighted Matching},
  author={Kim, Soyeong and Seok, Jiwon and Lee, Jaehwan and Jo, Kichun},
  journal={IEEE Transactions on Intelligent Vehicles},
  year={2024},
  publisher={IEEE}
}

@inproceedings{seok2025radar4voxmap,
  title={Radar4VoxMap: Accurate Odometry from Blurred Radar Observations},
  author={Seok, Jiwon and Kim, Soyeong and Jo, Jaeyoung and Lee, Jaehwan and Jung, Minseo and Jo, Kichun},
  booktitle={2025 IEEE International Conference on Robotics and Automation (ICRA)},
  pages={6206--6212},
  year={2025},
  organization={IEEE}
}

@article{myronenko2010point,
  title={Point set registration: Coherent point drift},
  author={Myronenko, Andriy and Song, Xubo},
  journal={IEEE transactions on pattern analysis and machine intelligence},
  volume={32},
  number={12},
  pages={2262--2275},
  year={2010},
  publisher={IEEE}
}

@article{jian2010robust,
  title={Robust point set registration using gaussian mixture models},
  author={Jian, Bing and Vemuri, Baba C},
  journal={IEEE transactions on pattern analysis and machine intelligence},
  volume={33},
  number={8},
  pages={1633--1645},
  year={2010},
  publisher={IEEE}
}

@article{hexsel2022dicp,
  title={DICP: Doppler iterative closest point algorithm},
  author={Hexsel, Bruno and Vhavle, Heethesh and Chen, Yi},
  journal={arXiv preprint arXiv:2201.11944},
  year={2022}
}

@inproceedings{zhang2023ntu4dradlm,
  title={Ntu4dradlm: 4d radar-centric multi-modal dataset for localization and mapping},
  author={Zhang, Jun and Zhuge, Huayang and Liu, Yiyao and Peng, Guohao and Wu, Zhenyu and Zhang, Haoyuan and Lyu, Qiyang and Li, Heshan and Zhao, Chunyang and Kircali, Dogan and others},
  booktitle={2023 IEEE 26th International Conference on Intelligent Transportation Systems (ITSC)},
  pages={4291--4296},
  year={2023},
  organization={IEEE}
}

@inproceedings{zhang2018tutorial,
  title={A tutorial on quantitative trajectory evaluation for visual (-inertial) odometry},
  author={Zhang, Zichao and Scaramuzza, Davide},
  booktitle={2018 IEEE/RSJ international conference on intelligent robots and systems (IROS)},
  pages={7244--7251},
  year={2018},
  organization={IEEE}
}

@misc{probreg,
  author = {Neka Nat},
  title = {probreg: Python package for point cloud registration using probabilistic model},
  howpublished = {\url{https://github.com/neka-nat/probreg}}
}

@inproceedings{smola2007hilbert,
  title={A Hilbert space embedding for distributions},
  author={Smola, Alex and Gretton, Arthur and Song, Le and Sch{\"o}lkopf, Bernhard},
  booktitle={International conference on algorithmic learning theory},
  pages={13--31},
  year={2007},
  organization={Springer}
}

@article{gretton2012kernel,
  title={A kernel two-sample test},
  author={Gretton, Arthur and Borgwardt, Karsten M and Rasch, Malte J and Sch{\"o}lkopf, Bernhard and Smola, Alexander},
  journal={The journal of machine learning research},
  volume={13},
  number={1},
  pages={723--773},
  year={2012},
  publisher={JMLR. org}
}

@article{muandet2017kernel,
  title={Kernel mean embedding of distributions: A review and beyond},
  author={Muandet, Krikamol and Fukumizu, Kenji and Sriperumbudur, Bharath and Schölkopf, Bernhard},
  journal={Foundations and Trends in Machine Learning},
  volume={10},
  number={1-2},
  pages={1--141},
  year={2017},
  publisher={Emerald Publishing Limited}
}



\end{document}